\DeclareMathOperator*{\argmin}{argmin}
\DeclareMathOperator*{\argmax}{argmax}
\title{Enhancing Zero-Shot Vision Models by Label-Free Prompt Distribution Learning and Bias Correcting}
\author{Xingyu Zhu\textsuperscript{1{$\star$}}\quad
        Beier Zhu\textsuperscript{2{$\star$}}\quad
        Yi Tan\textsuperscript{1}\quad
        Shuo Wang\textsuperscript{1${\dag}$}\quad
        Yanbin Hao\textsuperscript{1}\quad
        Hanwang Zhang\textsuperscript{2} \\
  $^{1}$University of Science and Technology of China \\
  $^{2}$Nanyang Technological University \\
  \texttt{xingyuzhu@mail.ustc.edu.cn}, \quad \texttt{shuowang.edu@gmail.com}\\
  }
\begin{document}

\def\eg{\emph{e.g.}} 
\def\Eg{\emph{E.g}}
\def\ie{\emph{i.e.}} 
\def\Ie{\emph{I.e}}
\def\cf{\emph{cf} } 
\def\Cf{\emph{Cf}}
\def\etc{\emph{etc}} 
\def\etal{\emph{et al.}} 
\def\vs{\emph{vs}}
\def\wrt{w.r.t. } 
\def\dof{d.o.f}
\def\iid{i.i.d} 
\def\wolog{w.l.o.g}

\definecolor{tabhighlight}{HTML}{e5e5e5}

\newcommand{\Ev}{\Phi_{\mathsf{v}}}
\newcommand{\Et}{\Phi_{\mathsf{t}}}
\newcommand{\softmax}{\mathrm{softmax}}
\newcommand{\x}{\mathbf{x}}
\newcommand{\w}{\mathbf{w}}
\newcommand{\bt}{\mathbf{t}}
\newcommand{\e}{\mathbf{e}}
\newcommand{\z}{\mathbf{z}}
\newcommand{\q}{\mathbf{q}}
\newcommand{\y}{\mathbf{y}}
\newcommand{\s}{\mathbf{s}}

\newcommand{\Ps}{\mathbb{P}_s}
\newcommand{\Pt}{\mathbb{P}_t}
\newcommand{\Pp}{\mathbb{P}_p}
\newcommand{\PP}{\mathbb{P}}
\newcommand{\ens}{\mathsf{ens}}
\newcommand{\ours}{\text{Frolic}}

\newcommand{\piA}{
\hat{\pi}_p^\texttt{m1}}
\newcommand{\piB}{\hat{\pi}_p^\texttt{m2}}

\newcommand{\tableCellHeight}{1}
\newcommand{\tabstyle}[1]{
  \setlength{\tabcolsep}{#1}
  \renewcommand{\arraystretch}{\tableCellHeight}
  \centering
  \small
}

\newenvironment{customitemize}[1]{%
    \begin{list}{\labelitemi}{%
        \setlength{\leftmargin}{#1} 
    }
}{%
    \end{list}
}

\newtheorem{definition}{Definition}
\newtheorem{theorem}{Theorem}
\newtheorem{assumption}{Assumption}
\newtheorem{lemma}{Lemma}
\newtheorem{proposition}{Proposition}
\newtheorem{corollary}{Corollary}

\newtheoremstyle{restatedlemma}
  {\topsep}       
  {\topsep}       
  {\itshape}      
  {}              
  {\bfseries}     
  {.}             
  {.5em}          
  {\thmname{#1} \thmnumber{#2} (\thmnote{#3})} 

\newtheoremstyle{restatedproposition}
  {\topsep}       
  {\topsep}       
  {\itshape}      
  {}              
  {\bfseries}     
  {.}             
  {.5em}          
  {\thmname{#1} \thmnumber{#2} (\thmnote{#3})} 

\theoremstyle{restatedlemma}
\newtheorem*{restatedlemma}{Restated Lemma}

\theoremstyle{restatedproposition}
\newtheorem*{restatedproposition}{Restated Proposition}

\maketitle
\begin{NoHyper}
\def\thefootnote{$\star$}\footnotetext{Equal contributions}
\def\thefootnote{\dag}\footnotetext{Corresponding author}
\end{NoHyper}


\begin{abstract}
 Vision-language models, such as CLIP, have shown impressive generalization capacities when using appropriate text descriptions. While optimizing prompts on downstream labeled data has proven effective in improving performance, these methods entail labor costs for annotations and are limited by their quality. Additionally, since CLIP is pre-trained on highly imbalanced Web-scale data, it suffers from inherent label bias that leads to suboptimal performance. 
 To tackle the above challenges, we propose a label-\textbf{F}ree p\textbf{ro}mpt distribution \textbf{l}earning and b\textbf{i}as \textbf{c}orrection framework, dubbed as \textbf{Frolic}, which boosts zero-shot performance without the need for labeled data. Specifically, our Frolic learns distributions over prompt prototypes to capture diverse visual representations and adaptively fuses these with the original CLIP through confidence matching.
This fused model is further enhanced by correcting label bias via a label-free logit adjustment. Notably, our method is not only training-free but also circumvents the necessity for hyper-parameter tuning. Extensive experimental results across 16 datasets demonstrate the efficacy of our approach, particularly outperforming the state-of-the-art by an average of $2.6\%$ on 10 datasets with CLIP ViT-B/16 and achieving an average margin of $1.5\%$ on ImageNet and its five distribution shifts with CLIP ViT-B/16. Codes are available in \url{https://github.com/zhuhsingyuu/Frolic}.
\end{abstract}
\section{Introduction}
Vision-language models (VLMs), such as CLIP~\cite{clip}, which are pre-trained on large-scale datasets using contrastive loss, effectively align visual and textual representations within a shared feature space. 
This capability enables the zero-shot inference on downstream tasks through prompting and achieves remarkable performance.
For example, using a selection of 80 hand-crafted prompts, a zero-shot CLIP ViT-B/16 achieves an accuracy of $68.7\%$, and with prompts generated by language models~\cite{CuPL}, the accuracy increases to $69.9\%$. 

The success of zero-shot capabilities heavily relies on the appropriate text descriptions of the classes, which has gained research interest in improving prompts. Recent studies propose learning prompts from a small set of labeled images in the downstream data~\cite{coop,cocoop,prograd}. Among these studies, Lu \etal~\cite{LuLZL022} and Wang \etal~\cite{wang2024a} have found that learning the distribution of diverse prompts, which better captures the variance in visual representations, leads to improved performance. 
Although these methods have achieved significant improvements, they still depend on artificial prior knowledge for labeling downstream data and are limited by the quality of manual annotations, which may restrict the scalability of the original model.

Another significant approach to enhancing zero-shot performance involves correcting the label bias inherent in skewed web-scale pre-training data~\cite{simple_zero,parashar2024neglected,ZhuTSZ23}. This bias leads to highly imbalanced predictions and suboptimal performance. As illustrated in Figure~\ref{fig:motivation}(c), the average predicted probability on ImageNet using ViT-B/16 reveals an imbalanced distribution: the highest class probability exceeds $0.002$, whereas the lowest is below $0.0005$. Existing methods correct this bias by allowing access to a portion of the pre-training data~\cite{simple_zero,parashar2024neglected}, or by using labeled downstream data~\cite{ZhuTSZ23}. However, the pre-training data is often inaccessible due to privacy or copyright concerns, and debiasing without labeled data is challenging.

\begin{figure}
    \centering
    \includegraphics[width=0.95\textwidth]{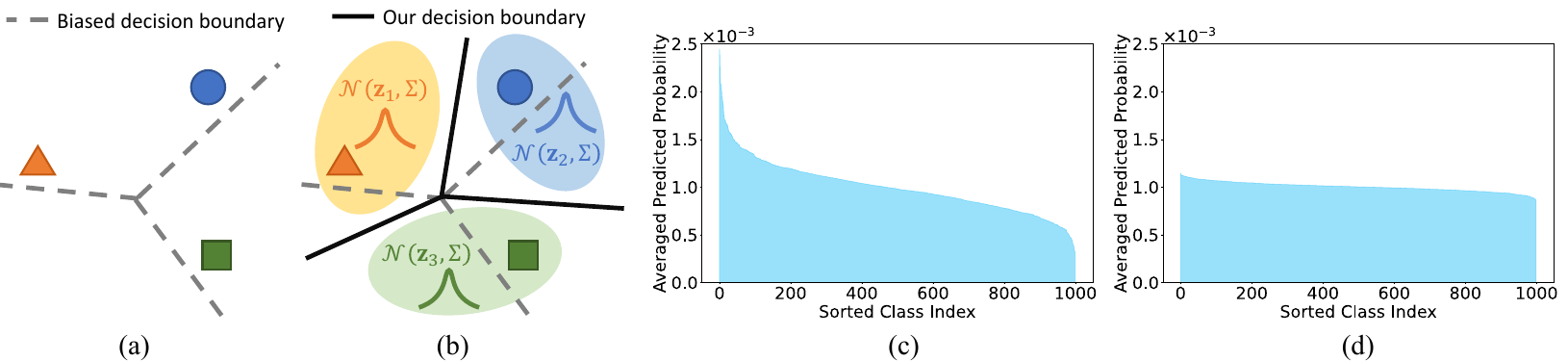}
    \caption{Illustration of prompt distribution learning and label bias correction on ImageNet using CLIP ViT-B/16. (a) Existing zero-shot models~\cite{simple_zero, CuPL}. (b) Our prompt distribution learning (c) Average probability prediction of original CLIP. (d) Average probability prediction of our $\ours$.}
    \label{fig:motivation}
\end{figure}

In this paper, we introduce a label-\textbf{F}ree p\textbf{ro}mpt distribution \textbf{l}earning and b\textbf{i}as \textbf{c}orrection framework, dubbed as \textbf{Frolic}, which eliminates the need for data annotations to enhance zero-shot performance. First, unlike previous methods~\cite{simple_zero,CuPL,coop, wang2023bi,yi2023invariant}, which use a single class prototype for each class to define the decision boundary (as shown in Figure~\ref{fig:motivation}(a)), our approach employs Gaussian distributions to model the varied visual representations of text prototypes, as illustrated in Figure~\ref{fig:motivation}(b). It is worth noting that estimating such a distribution is non-trivial, since classical maximum likelihood estimation requires the annotation of each sample. Fortunately, we demonstrate that it is possible to infer distribution for each class directly from the first and second moments of the marginal distribution of downstream data without label information.
Second, to prevent the use of pre-training data or labeled samples in downstream tasks, we develop a bias estimation mechanism, which transitions the sampling process from the pre-training data distribution to a class-conditional sampling from downstream distribution. By incorporating the estimated label bias into zero-shot models, we can achieve a balanced prediction, as illustrated in Figure~\ref{fig:motivation} (d). 
Furthermore, we explore the possibility of combining the original CLIP predictions with those from the Gaussian-based models to enhance zero-shot performance. To this end, we have developed a confidence-matching technique that dynamically balances the contributions of the two models, eliminating the need for hyperparameter tuning. Notably, our framework is training-free, which enhances both flexibility and ease of implementation.

The main contributions of this work are:
\begin{itemize}[leftmargin=8mm]
\item We enhance zero-shot performance by estimating a distribution over prompt prototypes to capture the variance in visual appearances. We demonstrate that this process can be implemented entirely without labels.

\item We propose a confidence matching technique that fuses the original CLIP model with a Gaussian distribution-based model to further enhance zero-shot performance. This process eliminates the need for hyper-parameter searching, in stark contrast to previous studies.

\item We develop an unsupervised method to correct pre-training label bias. Unlike existing methods that require access to pre-training data, our Proposition~\ref{prop:2} suggests that we can avoid sampling from the pre-training  distribution for estimating and correcting this bias. Instead, our method utilizes only downstream images.

\item We demonstrate the effectiveness of our proposed method $\ours$ by conducting experiments across 16 datasets, which has a consistent and significant improvement over existing baselines. For example, our method surpasses the state-of-the-art zero-shot models by a margin of $2.6\%$ on average with CLIP ViT-B/16. 
\end{itemize}

\section{Related Works}
\noindent\textbf{Zero-shot vision models.} Vision models pre-trained with auxiliary language supervision, such as CLIP~\cite{clip} and OpenCLIP~\cite{ChertiBWWIGSSJ23}, facilitate zero-shot inference through prompting.  
Enhancing zero-shot performance has gained increasing research interest: (1) One approach involves prompt engineering, which includes designing hand-crafted prompts based on human priors~\cite{clip} or automatically generating prompts via language models~\cite{SuS-X}. (2) Another promising approach seeks to improve classifiers, \eg, ZPE~\cite{simple_zero} scores the importance of candidate prompts for prompt ensembling. InMaP~\cite{InMaP} reduces the modality gap between vision and text. Several studies~\cite{TPT, ProAlign} optimize prompt at test time by encouraging consistent predictions across augmented samples.
Our work aims to enhance zero-shot models by learning the prompt distribution and mitigating the pre-training label bias.

\noindent\textbf{Prompt distribution learning.} 
Automatically learning prompts from downstream data has shown potential in improving zero-shot models~\cite{coop,cocoop,prograd}. 
These methods typically optimize prompts via minimizing the classification loss on the target task.
However, as pointed out in Lu~\etal~\cite{LuLZL022}, learning prototype prompts overlook the diversity of visual representations. To this end, they estimate a distribution over the prompts to capture the variance of visual representations.
Recently, Wang~\etal~\cite{wang2024a} propose training-free prompt distribution learning to improve efficiency.
Contrary to existing methods~\cite{LuLZL022} that estimate distributions through supervised approaches, our method circumvents the necessity for labels by inferring the variance of distributions from the statistics of unlabeled data.

\noindent\textbf{Correcting label bias.} Label bias generally occurs in the presence of skewed or imbalanced training data. In response to this challenge, Logit Adjustment (LA)~\cite{TDE,HongHCSKC21,MenonJRJVK21,ZhuTSZ23} has emerged as a prominent technique in long-tailed learning, specifically designed to adjust the decision boundary of classifiers to mitigate label bias. Menon \etal~\cite{MenonJRJVK21} derives the theoretically optimal adjustment for logits. Zhu \etal~\cite{ZhuTSZ23} extents LA to fine-tune zero-shot models by removing the pre-trained label bias. Unlike approaches that rely on the label distribution of the training set~\cite{TDE, HongHCSKC21, MenonJRJVK21,zhu2022cross} or the labels of fine-tuning data~\cite{ZhuTSZ23}, our method adjusts the logits using unlabeled test data.

\section{Methods}
In this section, we present our prompt distribution learning, adaptive fusion, and logit adjustment techniques for adapting zero-shot models.
Without loss of generality, we adopt CLIP~\cite{clip} as our zero-shot model. To begin with, we emphasize three advantages of our framework:

\noindent\textbf{Training-free:} Our $\ours$ is training-free without optimizing the backbone of the zero-shot models, enhancing both flexibility and ease of implementation.

\noindent\textbf{Label-free:} Our method $\ours$ requires no external labeled data, making it suitable for zero-shot scenarios.

\noindent\textbf{No hyper-parameters searching:} Our method $\ours$ eliminates hyper-parameter tuning on validation datasets, in stark contrast to~\cite{wang2024a, tip}

\subsection{Setup}
The zero-shot model consists of a visual encoder $\Ev(\cdot)$ and a text encoder $\Et(\cdot)$. Given a set of unlabeled image data $\{x_i\}_{i=1}^N$ and the unique text set of the class description $\{z_j\}_{j=1}^K$, their visual and text representation can be computed as:
\begin{equation}
    \x_i=\Ev(x_i);\quad \z_j=\Et(z_j),
\end{equation}
where $\x_i$ and $\z_j$ share the same dimension ($\x,\z \in \mathbb{R}^d$). $N$ is the sample size and $K$ is the class size. $\z_j$ can be considered as the prototype for class $j$. With an image $\x$ and all prototypes $\{\z_j\}_{j=1}^K$, zero-shot CLIP predicts the label as:
\begin{equation}\label{eq:fc}
    y = \argmax_j f_{\mathsf{c}}(\x)_j=\argmax_j \z_j^\top\x,
\end{equation}
where $f_{\mathsf{c}}(\x)_j=\z_j^\top\x$ is the score for class $j$. 
\subsection{Label-Free Prompt Distribution Learning}\label{subsec:gda}
In order to express the diverse visual variations, our approach aims to learn the distribution of the class prototypes. Previous studies~\cite{LuLZL022,wang2024a} show that the Gaussian distribution is effective to model the distribution of the CLIP features and achieves impressive improvement. However, these methods require \textit{extra labeled training data}, which is not applicable to our zero-shot setting. 

Specifically, we follow~\cite{wang2024a} to assume $\mathcal{N}(\z_{1:K},\Sigma)$ with identical covariance is the underlying distribution. 
In classical maximum likelihood estimation~\cite{bishop}, the shared covariance $\Sigma$ is computed by averaging the empirical covariances of $K$ classes: 
$\hat{\Sigma}=\frac{1}{K}\sum_{j} \hat{\Sigma}_j$, where $\hat{\Sigma}_j=\frac{1}{|\mathcal{C}_j|-1}\sum_{\x\in \mathcal{C}_j} (\x -\z_j)(\x-\z_j)^\top$. Here, one need the label information of each image to compute $\hat{\Sigma}_j$. Fortunately, to avoid using label information, we can infer $\Sigma$ directly from the expectation and the second order moment of the marginal distribution $\mathbb{P}(\x)$.\footnote{Despite that the modality gap exists between the text and vision space of CLIP models, we can use the unsupervised method from InMaP~\cite{InMaP} to effectively align the two modalities.} Using a Gaussian mixture model with the priors $\{\pi_j\}_{j=1}^K$, $\mathbb{P}(\x)$ is given by:
\begin{equation}
    \mathbb{P}(\x)=\sum_{j=1}^K \pi_j \mathcal{N}(\x;\z_j,\Sigma),\quad \mathcal{N}(\x;\z_j,\Sigma)= \frac{1}{\sqrt{(2\pi)^d|\Sigma|}}\exp\{-\frac{1}{2}(\x - \z_j)^\top\Sigma^{-1}(\x - \z_j)\}
\end{equation}
Denote the second moment of $\x$ as $M$, we have (proof in Section~\ref{sec:proof-LFPDL}):
\begin{equation}\label{eq:S2Sigma}
    M=\Sigma+\sum_j\pi_j\z_j\z_j^\top.
\end{equation}

Denote $\bm{\pi}=[\pi_1,..,\pi_K]^\top$, $Z=[\z_1,..,\z_K]^\top$, and the expectation of $\x$ as $\bm{\mu}$, the prior over the unlabeled data distribution can be estimated by (proof in Section~\ref{sec:proof-pi}):
\begin{equation}\label{eq:est_pi}
    \bm{\pi}=Z^{-1}\bm{\mu}
\end{equation}
We estimate the expectation and the second order moment as 
$\hat{\bm{\mu}}=\frac{1}{N}\sum_{i=1}^N \x_i$ and $\hat{M}=\frac{1}{N}\sum_{i=1}^N \x_i\x_i^\top$, which are unbiased and consistent.
In practice, given that test benchmarks are generally class-balanced, we use a uniform prior over the data distribution, \ie, $\pi_j=\frac{1}{K}$.  Combining with Eq.~\eqref{eq:S2Sigma}, the estimated shared covariance $\hat{\Sigma}$ can be written as: 
 \begin{equation}
      \label{eq:sigma}
      \hat{\Sigma}=\hat{M}-\frac{1}{K}\sum_j\z_j\z_j^\top.
 \end{equation}
Let $\w_j = \hat{\Sigma}^{-1} \z_j$ and $b_j = -\frac{1}{2} \z_j^\top \w_j$, the Gaussian discriminant analysis predicts the label for an image $\x$ as follows (proof in Section~\ref{sec:proof_sg_label}):
\begin{equation}
    \label{eq:sg_label}
    y=\argmax_j f_\mathsf{g}(\x)_j=\argmax_j \w_j^\top\x + b_j
\end{equation}
where $f_{\mathsf{g}}(\x)_j=\w_j^\top\x + b_j$ is the score for class $j$. 

\subsection{Prediction Fusion via Adaptive Calibration.}\label{subsec:cali}
\begin{wrapfigure}{r}{0.4\textwidth}
\vspace{-8mm}
    \centering
\includegraphics[width=0.4\textwidth]{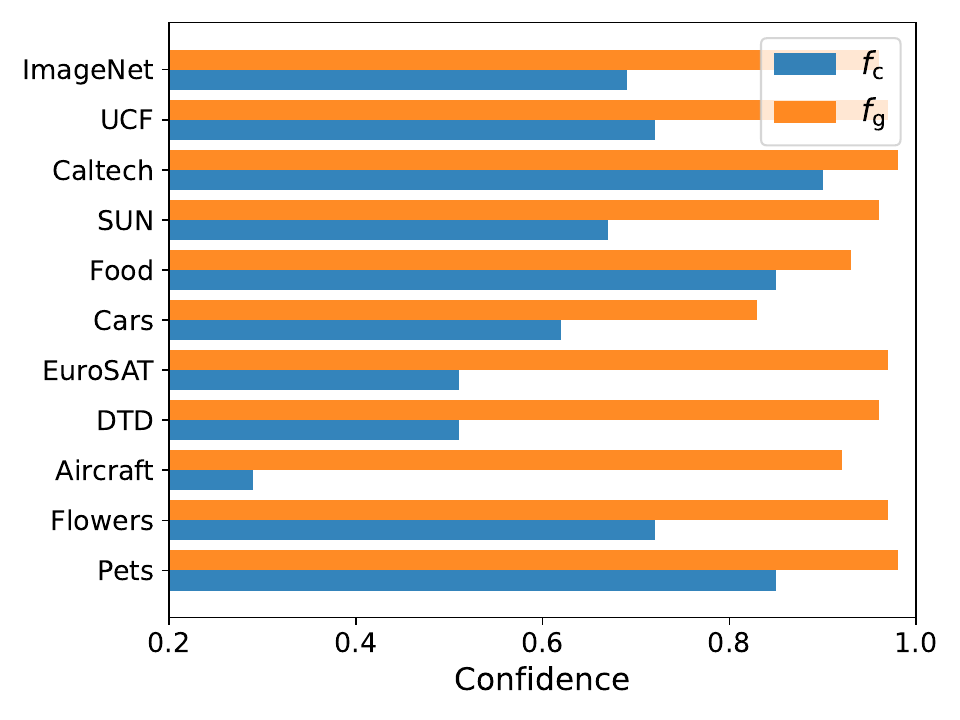}
\vspace{-8mm}
\caption{Comparison of confidence.}
\vspace{-5mm}
\label{fig:orig_confidence}
\end{wrapfigure}
As a rule of thumb, combining the zero-shot predictions with the ones from the learned model can further improve performance for CLIP adaptations~\cite{tip, SuS-X,WortsmanIKLKRLH22,prograd,wang2024a, SSP}.
Previous studies commonly employ a mixing coefficient, $\alpha$, to balance the contributions of two models, \eg, $f(\x) = f_\mathsf{c}(\x) + \alpha f_\mathsf{g}(\x)$. Typically, this hyper-parameter $\alpha$ is optimized on labeled data to maximize accuracy. However, in our context, labels are unavailable, it is not possible to search for the optimal value of $\alpha$. It is imperative to develop a mechanism that balances the prediction fusion without relying on the label.

The key in our prediction fusion lies in aligning the average confidence of the two models. Formally, the average confidence over the dataset $\{\x_i\}_{i=1}^N$ scaled by a temperature $\tau$ is given by the average of the model's probability for its prediction:
\begin{equation}
    \text{conf}(f, \tau)=\frac{1}{N}\sum_{i=1}^N\max_j\softmax(f(\x_i)/\tau)_j.
\end{equation}
Ideally, a model's average confidence should reflect the predicted accuracy, which is called a well-calibrated model. Suppose we have the oracle well-calibrated models, denoted by $f'_\mathsf{c}(\cdot)$ and $f'_\mathsf{g}(\cdot)$, Kumart~\etal~\cite{kumar2022calibrated} prove that the optimal strategy is to fuse the two predictions equally, \ie, $f_\mathsf{f}(\x)=f'_\mathsf{c}(\x)+f'_\mathsf{g}(\x)$. However, as shown in Figure~\ref{fig:orig_confidence},  $f_\mathsf{g}$ is much overconfident than $f_\mathsf{c}$. Let $f_\mathsf{g}(\x)=Cf'_\mathsf{g}(\x)$ for large $C\in \mathbb{R}^+$ (an overconfident model magnifies its logits) and suppose $f_\mathsf{c}(\x)\approx f'_\mathsf{c}(\x)$. 
The fused predictions are given by $f_\mathsf{f}(\x)=Cf'_\mathsf{g}(\x)+f'_\mathsf{c}(\x)$. For very large $C$, $f_\mathsf{f}(\x)$ and $f_\mathsf{g}(\x)$ have the same predictions, \ie, $f_\mathsf{f}(\x)$ is biased towards the $f_\mathsf{g}(\x)$. As we do not have the label to compute accuracy, we cannot apply classical calibration methods~\cite{Recalibration,GuoPSW17} to calibrate $f_\mathsf{g}(\x)$ and $f_\mathsf{c}(\x)$. As our desideratum is to automatically balance the contribution of the two models, we can optimize $\tau_\mathsf{g}$ to make the confidence of $f_\mathsf{g}$ to match up the one of $f_\mathsf{c}$, which circumvent the need of labels:
\begin{equation}\label{eq:searchtau}
    \tau_\mathsf{g}=\argmin_{\tau_\mathsf{g}} \left| \text{conf}(f_\mathsf{g}, \tau_\mathsf{g}) - \text{conf}(f_\mathsf{c}, \tau_\mathsf{c}) \right|
\end{equation}
Specifically, we implement this by binary search, as the confidence monotonically decreases as the temperature increases. $\tau_\mathsf{c}=0.01$ is fixed and learned by CLIP.
The fused logits are given by:
\begin{equation}
    \label{eq:fused_pred}
    f_\mathsf{f}(\x)=f_\mathsf{g}(\x)/\tau_\mathsf{g} + f_\mathsf{c}(\x)/\tau_\mathsf{c}
\end{equation}

\begin{figure}
    \centering
    \begin{minipage}{0.45\linewidth}
        \begin{algorithm}[H]
        \caption{Pipeline of our \ours}  
        \begin{algorithmic}[1]
        \State \textbf{Given}: Unlabeled data $\{\x_i\}_{i=1}^N$, 
        \Statex \hspace{\algorithmicindent} prototypes $\{\z_j\}_{j=1}^K$ and $\tau_\mathsf{c}$
        \State Build $f_\mathsf{c}(\x)_y=\z_y^\top \x$ 
        \State Compute $\hat{\Sigma}=\hat{M}-\frac{1}{K}\sum_j\z_j\z_j^\top$
            \Statex \hspace{\algorithmicindent} where $\hat{M}=\frac{1}{N}\sum_i \x_i\x_i^\top$
        \State Compute $\w_j = \hat{\Sigma}^{-1} \z_j$, $b_j = -\frac{1}{2} \z_j^\top \w_j$
        \State Build $f_{\mathsf{g}}(\x)_y=\w_y^\top\x + b_y$
        \State Search $\tau_\mathsf{g}$ by Eq.~\eqref{eq:searchtau}  
        \State Build $f_\mathsf{f}(\x)=f_\mathsf{g}(\x)/\tau_\mathsf{g} + f_\mathsf{c}(\x)/\tau_\mathsf{c}$
        \State Compute $\hat{\bm{\beta}}$ by Algorithm~\ref{algo:1}
        \State \Return $f_\mathsf{d}(\x) = f_\mathsf{f}(\x) - \ln \hat{\bm{\beta}}$
        \end{algorithmic}\label{algo:all}
        \end{algorithm}
    \end{minipage}
    \hfill
    \begin{minipage}{0.5\linewidth}
        \begin{algorithm}[H]
        \caption{Estimation of $\bm{\beta}$}
        \begin{algorithmic}[1]
        \State \textbf{Given}: Unlabeled data $\{\x_i\}_{i=1}^N$, 
        \Statex \hspace{\algorithmicindent} predictor $f_\mathsf{f}(\cdot)$  and tolerance $\epsilon$.
        \State Initialize $\bm{\beta}^0, f^0_\mathsf{d}$ and $S^0$ by Eq.~\eqref{eq:init}
        \State $t=0$
        \Repeat 
            \State $t = t + 1$
            \State Update $\bm{\beta}^t$ by solving $(S^{t-1}-I)\bm{\beta}^t=\mathbf{0}$
            \State Update $f_\mathsf{d}^t=f_\mathsf{f}-\bm{\beta}^t$
            \State Update $S^t$ from $\s^t_j=\frac{1}{|\mathcal{C}^t_j|}\sum_{\x\in \mathcal{C}^t_j}s(\x),$
            \Statex \hspace{\algorithmicindent} where $\mathcal{C}^t_j$ is assigned by $f_\mathsf{d}^t$
        
        \Until{$\|\bm{\beta}^t-\bm{\beta}^{t-1}\|_1<\epsilon$}
        \State \Return $\hat{\bm{\beta}} = \bm{\beta}^t$
        \end{algorithmic}\label{algo:1}
        \end{algorithm}
    \end{minipage}
\end{figure}

\subsection{Correcting Pre-training Label Bias via Label-Free Logit Adjustment}\label{subsec:debias}
Pre-training datasets typically exhibit a long-tailed concept distribution, leading to biased performance in zero-shot models~\cite{ZhuTSZ23,parashar2024neglected,chen2024catastrophic,simple_zero}. This bias occurs because zero-shot models reflect the posterior probability $\mathbb{P}(y|\x)$ derived from the pre-training distribution. According to Bayes' rule, this posterior probability is influenced by the pre-training label distribution $\mathbb{P}(y)$, as $\mathbb{P}(y|\x) \propto \mathbb{P}(\x|y)\mathbb{P}(y)$. If the prior probability of class $j$ is significantly larger than that of other classes (\eg, $\mathbb{P}(j) \gg \mathbb{P}(i),\ \forall i \in [K], i \neq j$), the predictions will be biased toward class $j$.

Prior research~\cite{MenonJRJVK21,HongHCSKC21} has identified a theoretical optimal solution to address this label bias: let $\beta_y$ denote the prior probability of class $y$, \ie, $\beta_y = \mathbb{P}(y)$. The debiased logit of $f_\mathsf{f}(\x)$ for class $y$ should be (proof in Section~\ref{sec:debias}):
\begin{equation}\label{eq:debias}
f_\mathsf{d}(\x)_y = f_\mathsf{f}(\x)_y - \ln \beta_y.
\end{equation}
Previous methods estimate $\beta$ either by accessing the pre-training data~\cite{parashar2024neglected, simple_zero} or counteract the influence of the prior by optimizing on labeled downstream data~\cite{ZhuTSZ23}. 
However, these approaches are often impractical due to inaccessible pre-training labels due to privacy or copyright concerns or the necessity for labeled downstream data. In this work, we address label bias using only the unlabeled downstream data $\{\x_i\}_{i=1}^N$.

Let $s(\x) = \softmax(f_\mathsf{f}(\x))$ represent the softmax outputs of $f_\mathsf{f}(\x)$, where $s(\x)_y=\hat{\mathbb{P}}(y|\x)$ is the predicted probability for class $y$. Define $\s_j = \mathbb{E}_{\x}[s(\x)|Y=j]$ as the expected posterior probability over the image distribution of class $j$, and let $S = [\s_1,..., \s_K]\in \mathbb{R}^{K\times K}$. We prove that the pre-training label prior $\bm{\beta} = [\beta_1,..., \beta_K]^\top \in \mathbb{R}^K$ must satisfy the following linear equation system:
\begin{equation}\label{eq:qP=q}
    (S-I)\bm{\beta} = \mathbf{0}.
\end{equation} 
\noindent\textbf{Remark.} The key point in Eq.~\eqref{eq:qP=q} is that we avoid sampling from the pre-training data distribution; instead, we sample from $\mathbb{P}(\x|y)$, which is available from the downstream data.
We provide the proof in Section~\ref{sec:proveqP=q} and the numerical power solver for $\bm{\beta}$ in Section~\ref{sec:power}.

We iteratively refine the estimation of $S$ and solve for $\bm{\beta}$ using updated pseudo-labels generated by $f_\mathsf{d}(\x)$. As $f_\mathsf{d}(\x)$ becomes more precise, it yields more accurate pseudo-labels for $\x$, which in turn enhances the accuracy of our estimation of $\bm{\beta}$. Specifically, we initialize
\begin{equation}\label{eq:init}
    \bm{\beta}^0=[1/K,...,1/K]^\top,\ f_\mathsf{d}^{0}=f_\mathsf{f},\  \s_j^0=\frac{1}{|\mathcal{C}^0_j|}\sum_{\x\in \mathcal{C}^0_j}s(\x), \ \text{and}\ S^0=[\s_1^0,...,\s_K^0]
\end{equation}
 where $\x \in \mathcal{C}_j^0$ if $\x$ is classified as $j$ by $f_\mathsf{d}^{0}(\x)$. We proceed by solving for $\bm{\beta}^1$ using Eq.~\eqref{eq:qP=q}, refining $f_\mathsf{d}^1(\x)$ using Eq~\ref{eq:debias} and reassign the pseudo label using $f_\mathsf{d}^1(\x)$ to estimate the updated $\s^1_j$. This process is repeated $t$ times until the relative change in $\bm{\beta}$ satisfies the convergence criterion:
 \begin{equation}\label{eq:iter}
     \frac{\|\bm{\beta}^t-\bm{\beta}^{t-1}\|_1}{\|\bm{\beta}^{t-1}\|_1}=\|\bm{\beta}^t-\bm{\beta}^{t-1}\|_1<\epsilon,\quad \|\bm{\beta}^{t-1}\|=1\ \text{by definition}
 \end{equation}
 where $\epsilon$ is a predefined threshold for relative error tolerance. We summarize the algorithm for solving $\bm{\beta}$ in Algorithm~\ref{algo:1} and provide the overall pipeline in Algorithm~\ref{algo:all}.


\noindent\textbf{Discussion: Comparison with Other Prior Estimation Methods.} 
We compare existing methods for estimating pre-training label priors and demonstrate their in-applicability or flaws in our setting. 

(1) \textit{Explicit method}: the explicit method directly measures the frequency of each class in pre-training data, \eg, $\beta_y=\frac{N_y}{N}$, where $N_y$ is the sample size for class $y$ and $N$ is the total sample size. Most long-tail learning algorithms, \eg, LA and PC~\cite{MenonJRJVK21,HongHCSKC21}, are based on this method because they can access the training data. However, estimating such frequency is complex due to the free-form texts, as opposed to a pre-defined label set. In addition, the pre-training dataset is often inaccessible, making the method inapplicable in our case. 

(2) \textit{Implicit method}: \cite{simple_zero,parashar2024neglected} allow access to a portion of the pre-training data $\mathcal{D}_\mathsf{pt}$ and use the law of total probability to estimate the prior: 
\begin{equation}\label{eq:implicit}
    \beta_y=\mathbb{P}(y)=\int_\x\mathbb{P}_\mathsf{pt}(\x)\mathbb{P}(y|\x)\text{d}\x= \mathbb{E}_{\x\sim \mathbb{P}_\mathsf{pt}(\x)}[\mathbb{P}_\mathsf{pt}(y|\x)] \approx \frac{1}{|\mathcal{D}_\mathsf{pt}|}\sum_{\x\in \mathcal{D}_\mathsf{pt}}\hat{\mathbb{P}}_\mathsf{pt}(y|\x)
\end{equation}
where $\mathbb{\hat{P}}_\mathsf{pt}(y|\x)$ denotes the zero-shot model. However, in our setting, we do have access to the pre-training data or a portion of it. Wang \etal~\cite{wang2022debiased} replace the pre-training data $\mathcal{D}_\mathsf{pt}$ with the downstream data $\mathcal{D}_\mathsf{ds}$ in Eq.~\eqref{eq:implicit} to debias. However, this method neglects the distribution discrepancies between the pre-training and downstream data. In Section~\ref{sec:ablate}, we show that our debiasing significantly outperforms this implicit method. 

(3) \textit{TDE}~\cite{TDE}: Tang \etal~\cite{TDE} debias by removing features along a global direction, retaining only those orthogonal to it. Specifically, the global feature is estimated by $\bar{\x}=\frac{1}{|\mathcal{D}_\mathsf{pt}|}\sum_{\x\in \mathcal{D}_\mathsf{pt}}\x$. Given a test sample $\x$, TDE decomposes it into parallel and orthogonal directions to $\bar{\x}$: $\x=\x_\parallel + \x_\perp$. Then, only the orthogonal component is used for classification: $\hat{\mathbb{P}}_\mathsf{pt}(y|\x_\perp)$. While TDE does not require labels for the samples, we cannot apply it because it requires sampling from the pre-training data. In Section~\ref{sec:ablate}, we replace $\mathcal{D}_\mathsf{pt}$ with downstream data $\mathcal{D}_\mathsf{ds}$ and demonstrate its inferior performance.

(4) \textit{GLA}~\cite{ZhuTSZ23}: Zhu \etal~\cite{ZhuTSZ23} propose to estimate the pre-training prior from the downstream data using the Bayes optimal criterion. The pre-training prior $\bm{\beta}$ is solved by optimizing:
\begin{equation}
    \bm{\beta}=\arg\min_{\bm{\beta}} \mathbb{E}_{(\mathbf{x},y)\sim \mathcal{D}_\mathsf{ds}} [\ell_\mathsf{ce}(f_\mathsf{pt}(\x) - \ln \bm{\beta}, y)],
\end{equation}
where $\ell_\mathsf{ce}$ is the cross-entropy loss and $f_\mathsf{pt}(\x)$ is the logit of the zero-shot model.
While this method circumvents the need for pre-training data access, it is inapplicable because it requires labels for each downstream sample.

\section{Experiments}\label{sec:exp}

\subsection{Setup}\label{sec:setup}
\noindent\textbf{Datasets.} We conduct experiments on 16 image classification benchmarks, covering diverse range categories including generic objects (ImageNet~\cite{Imagenet}, Caltech~\cite{Caltech}), scenes (SUN~\cite{SUN}), textures (DTD~\cite{DTD}), satellite images (EuroSAT~\cite{Eurosat}), actions (UCF~\cite{UCF101}) and fine-grained categories (Pets~\cite{OxfordPet}, Cars~\cite{Cars}, Flowers~\cite{Flower}, Food~\cite{Food-101}, Aircraft~\cite{FGVC}).
Additionally, we evaluate on five ImageNet distribution shifted datasets~\cite{Imagenet}: ImageNetV2 (IN-V2)~\cite{ImageNetV2}, ImageNet-Sketch (IN-Sketch)~\cite{ImageNetSketch}, ImageNet-A (IN-A)~\cite{ImageNetA}, ImageNet-R (IN-R)~\cite{ImageNetR} and ObjectNet~\cite{BarbuMALWGTK19}.

\noindent\textbf{Implementation details.} We adopt CLIP~\cite{clip} ViT-B/16 and ViT-L/14 as our pre-trained models. The default model for ablation studies is CLIP ViT-B/16. We use the same text descriptions as SuS-X~\cite{SuS-X} and CuPL~\cite{CuPL}, and adhere to the InMaP~\cite{InMaP} settings to include all test images. 
$\tau_\mathsf{c}=0.01$ is provided by CLIP. $\epsilon$ in Algorithm~\ref{algo:1} is set to $0.01$. 
All experiments are conducted on a single NVIDIA 3090 GPU if not specified. Note that our algorithm \textit{does not require} any hyper-parameter searching.

\begin{table}[t]
\caption{Comparison of accuracy (\%) on 10 datasets for CLIP ViT-B/16 and ViT-L/14.}
\label{tab:cross_data}
\centering
\tabstyle{5pt}
\begin{tabular}{cl|cccccccccccc}
\toprule
\multicolumn{1}{l}{}                            & Method   & \rotatebox{90}{Pets}     & \rotatebox{90}{Flowers}   & \rotatebox{90}{Aircraft}   & \rotatebox{90}{DTD}      & \rotatebox{90}{EuroSAT}  &\rotatebox{90}{Cars}           & \rotatebox{90}{Food}  & \rotatebox{90}{SUN}       & \rotatebox{90}{Caltech} & \rotatebox{90}{UCF}   & Average  \\ \midrule
\multicolumn{1}{l|}{}                           &CLIP~\cite{clip} & 88.9    & 70.4    & 24.8    & 44.3     & 47.7     & 65.2     & 86.1     & 62.5     & 92.9      & 66.7    & 64.9  \\
\multicolumn{1}{l|}{}                           & TPT \cite{TPT}      & 87.7    & 68.9    & 24.7    & 47.7     & 42.4     & 66.8     & 84.6     & 65.5     & 94.1      & 68.0    & 65.0    \\ 
\multicolumn{1}{l|}{}                           & PromptAlign \cite{ProAlign}      & 90.7    &72.3    &24.8    &47.2     & 47.8     & 68.5     & 86.6     & 67.5     &94.0      & 69.4    & 66.8   \\ 
\multicolumn{1}{l|}{}                           & SuS-X-SD \cite{SuS-X}      & 90.5    &73.8    &28.6    &54.5     &57.4     & 66.1     & 86.0    & 67.7     &93.6      &66.5    &68.4    \\ 
\multicolumn{1}{l|}{}                           &{TDA \cite{TDA}}      & 88.6   & 71.4   & 23.9    & 47.4 & {58.0}  & 67.2     & 86.1     & 67.6     & 94.2      & 70.6   & 67.5 \\ 
\multicolumn{1}{l|}{}                           & {GPT4-Prompt \cite{GPT4V}} & {91.0}     & {74.5}     & {28.0}    & {48.5}      & {48.8}      & {66.8}      & {86.3}      & {65.5}      & {94.6}      & {72.0}     & {67.6} \\ 
\multicolumn{1}{l|}{}                           & CuPL-CLIP~\cite{CuPL} & 92.0    & 73.2    & 27.7    & 54.3     & 52.7    & 66.4     & 86.2    & 68.5     & 94.6      & 70.7   & 68.6  \\ 
\multicolumn{1}{l|}{}                           &  \textbf{Frolic}     & \textbf{92.9}    & \textbf{74.8}     & \textbf{31.5}     & \textbf{56.1}    & \textbf{58.5}  & \textbf{69.1} & \textbf{87.2}  & \textbf{70.8} & \textbf{95.2}      & \textbf{75.2}       & \textbf{71.1}  \\ 
\cmidrule{2-13}
\multicolumn{1}{l|}{}                           & InMaP~\cite{InMaP} & 92.9     & 71.8     &28.4    & 48.0  & 64.1 & 70.6  & 87.7 & 70.5      & 93.1       & 74.0 &70.1  \\ 
\multicolumn{1}{l|}{\multirow{-10}{*}{\rotatebox{90}{\footnotesize{ViT-B/16}}}}                          & \ + \textbf{Frolic}     & \textbf{93.6}    & \textbf{74.3}     & \textbf{31.8}     & \textbf{58.0}    & \textbf{65.3}  & \textbf{71.7} & \textbf{88.2}  & \textbf{72.8} & \textbf{95.4}      & \textbf{75.9}       & \textbf{72.7} \\ 
\midrule
\multicolumn{1}{l|}{}                           &CLIP~\cite{clip} & 93.5    & 79.3   & 32.4    & 53.0     & 58.0     & 76.8     & 91.0     & 67.5     & 94.8      & 74.2    & 72.0   \\
\multicolumn{1}{l|}{}                           & TPT \cite{TPT}         & 93.6    & 76.2    & 31.9    & 55.2     &  51.8    & 77.7     & 88.9     & 70.2  & 95.5     & 74.9   & 71.5 \\ 
\multicolumn{1}{l|}{}                           &{TDA \cite{TDA}}      &93.5    &80.5    &34.7     &56.7  &64.1   &78.3     &90.9     &71.5      &95.9       &76.6    &74.2  \\ 
\multicolumn{1}{l|}{}                           & {GPT4-Prompt \cite{GPT4V}}     & 94.1     & 81.5     & 36.3    & 54.8      & 54.1      & 77.9      & 91.4      & 70.3      & {96.2}      & 80.6    & 73.7  \\ 
\multicolumn{1}{l|}{}                           &CuPL-CLIP~\cite{CuPL}  & 94.3    & 79.8   & 35.5    & 62.7     & 61.2     & 78.0     & 91.3     & 72.4     & 96.7      & 75.9    & 74.7   \\
\multicolumn{1}{l|}{}                           & \textbf{Frolic}    & \textbf{94.9}    & \textbf{82.4}     & \textbf{40.0}     & \textbf{64.1}    & \textbf{66.2}  & \textbf{80.8} & \textbf{91.8}  & \textbf{74.5} & \textbf{97.2}      & \textbf{80.0}       & \textbf{77.1} \\ \cmidrule{2-13}
\multicolumn{1}{l|}{}                           &InMaP~\cite{InMaP}  & 95.2    & 80.7   & 37.6    & 60.2     & 70.6     & 82.5     & 92.2     & 75.0     & 94.9      & 80.4    &  76.9  \\
\multicolumn{1}{l|}{\multirow{-9}{*}{\rotatebox{90}{\footnotesize{ViT-L/14}}}}                          &\ + \textbf{Frolic}    & \textbf{95.4}    & \textbf{81.8}     & \textbf{42.1}     & \textbf{66.9}    & \textbf{71.0}  & \textbf{83.5} & \textbf{92.4}  & \textbf{77.3} & \textbf{97.3}      & \textbf{82.2}       & \textbf{78.9} \\ 
\bottomrule 
\end{tabular}
\end{table}
\subsection{Main Results} 
 We compare our method with several state-of-art methods, including CLIP~\cite{clip}, TPT~\cite{TPT}, PromptAlign~\cite{ProAlign}, SuS-X-DS~\cite{SuS-X}, TDA~\cite{TDA}, GPT4-Prompt~\cite{GPT4V}, CuPL-CLIP~\cite{CuPL}, and InMaP~\cite{InMaP}. Both TPT and TDA utilize a stream of unlabeled test images. For TPT, TDA and InMaP, we produce the results of ViT-L/14 by executing the official released code and maintaining the same hyper-parameters.

\noindent\textbf{Results on 10 datasets.} In Table~\ref{tab:cross_data}, we summarize the accuracy across all datasets, excluding ImageNet and its shifts (denoted as 10-datasets).  Our method consistently shows superior performance across the datasets and backbones, significantly surpassing GPT4-Prompt, which is known for generating high-quality prompts.  
By integrating our method with InMaP, our $\ours$ achieves the highest performance, with an average improvement of $2.6\%$ with ViT-B/16 and $2.0\%$ with ViT-L/14.

\begin{table}[htbp]
\caption{Comparison of accuracy (\%) on ImageNet and its variants for CLIP ViT-B/16 and ViT-L/14.}
\label{tab:imagenet_shift}
\centering
\tabstyle{7pt}
\begin{tabular}{cl|cccccccc}
\toprule
\multicolumn{1}{l}{}                            & Method  & {IN}     & {IN-V2}   & {IN-Sketch}   &{IN-A}      & {IN-R} &{ObjectNet}   & Average \\ \midrule
\multicolumn{1}{l|}{}                           &CLIP~\cite{clip} & 68.7    &62.2     &48.3     &50.6      & 77.7  & 53.5   & 60.1  \\
\multicolumn{1}{l|}{}                           & TPT \cite{TPT}      & 68.9    & 63.4    & 47.9   &54.7      &77.0 & 55.1     & 61.1    \\ 
\multicolumn{1}{l|}{}      & TDA\cite{TDA}      &69.5    & 64.6   & 50.5    & 60.1  & 80.2  &55.1  & 63.3     \\ 
\multicolumn{1}{l|}{}                           & {GPT4-Prompt \cite{GPT4V}} & 68.7     & 62.3     &48.2     &50.6      &77.8   & 53.7   & 60.2  \\  
\multicolumn{1}{l|}{}                           &CuPL-CLIP~\cite{CuPL} & 69.9    &64.4     &49.4     &59.7      &79.5  & 53.7   & 62.7  \\
\multicolumn{1}{l|}{\multirow{-3}{*}{\rotatebox{90}{\footnotesize{ViT-B/16}}}}                         & \textbf{$\ours$}                          & \textbf{70.9}    & \textbf{64.7}     & \textbf{53.3}     & \textbf{60.4}    & \textbf{80.7}  & \textbf{56.6} & \textbf{64.4}  \\  \cmidrule{2-9}
\multicolumn{1}{l|}{}                           & InMaP~\cite{InMaP} & 72.5    & 62.3    & 49.4    & 52.2     & 79.2    & 54.5    & 61.6  \\
\multicolumn{1}{l|}{}                           & \ + \textbf{$\ours$}                          & \textbf{73.3}    & \textbf{63.8}       & \textbf{52.9}    & \textbf{52.8}  &\textbf{79.6}  & \textbf{56.4} &\textbf{63.1} \\ \midrule
\multicolumn{1}{l|}{}                           & CLIP~\cite{clip} & 75.9    &70.2     &59.7    &70.9      & 87.9  &65.5   & 71.6 \\
\multicolumn{1}{l|}{}                           & TPT~\cite{TPT}                & 75.5    & 70.0    & 59.8   & 74.7     & 87.9    & 68.0  & 72.6   \\ 
\multicolumn{1}{l|}{}         & TDA\cite{TDA}      & 76.3    & 71.5    &61.3    & 77.9  & 89.8   & 67.0   & 73.9     \\ 
\multicolumn{1}{l|}{}                           & GPT4-Prompt~\cite{GPT4V} & 75.3     &70.3      &59.9     & 71.2     & 87.8  & 65.7       & 71.7  \\ 
\multicolumn{1}{l|}{}                           &CuPL-CLIP~\cite{CuPL} & 76.2    &71.9     &60.7     &77.9      & 89.6  & 65.7   &73.6  \\
 \multicolumn{1}{l|}{}                           & \textbf{$\ours$}                          & \textbf{77.4}    & \textbf{72.5}     & \textbf{63.1}     & \textbf{78.9}    & \textbf{90.3}  & \textbf{68.7}     & \textbf{75.1}  \\  \cmidrule{2-9}
\multicolumn{1}{l|}{\multirow{-6}{*}{\rotatebox{90}{\footnotesize{ViT-L/14}}}}       & InMaP~\cite{InMaP} & 79.3    & 72.1        &  65.1    &62.5   & 84.8    & 71.0  &72.4 \\
 \multicolumn{1}{l|}{}                           & \ + \textbf{$\ours$}                          & \textbf{79.7}    & \textbf{73.1}     & \textbf{65.7}     & \textbf{64.0}    & \textbf{85.9}        & \textbf{71.7} &\textbf{73.3} \\  \bottomrule
\end{tabular}
\end{table}


\begin{table}[htbp]
\caption{Accuracy (\%) of different models on 10-datasets, ImageNet and its five variant datasets.}
\label{tab:lda_ensemble_debias}
\tabstyle{4pt}
\begin{tabular}{ll|ccc|ccc}
\toprule
 & & \multicolumn{3}{c|}{ViT-B/16}      & \multicolumn{3}{c}{ViT-L/14}      \\  
&\multirow{-2}{*}{Model} & 10-datasets    & ImageNet & \multicolumn{1}{c|}{IN-Variants} & 10-datasets   & ImageNet & IN-Variants \\  \midrule
(1)&$f_\mathsf{c}$                  & 65.1      & 68.7   & 58.5  & 72.0    & 75.9      &72.3 \\ 
(2)&$f_\mathsf{c} - \ln \bm{\beta}$   & 68.4     & 69.7    & 61.2   & 75.1     & 76.2      & 73.4 \\ \midrule
(3)&$f_\mathsf{g}$  & 68.8      &69.8     & 61.3   & 74.7    &  76.0     &73.1 \\ 
(4)&$f_\mathsf{c} + f_\mathsf{g}$       & 66.3   &  68.9   & 59.1      & 72.5     & 76.1    & 72.4        \\ 
(5)&$f_\mathsf{f}=f_\mathsf{c}/\tau_\mathsf{c} + f_\mathsf{g}/\tau_\mathsf{g}$       & 70.4    & 69.8    & 61.9      & 75.5     &  76.9     &73.9         \\ 
\midrule
(6)&$f_\mathsf{d}=f_\mathsf{f}-\ln \bm{\beta}$       & \textbf{71.1}      & \textbf{70.9}      & \textbf{63.1}     & \textbf{77.2}    & \textbf{77.4}     & \textbf{77.4}          \\ 
\bottomrule
\end{tabular}
\end{table}

\noindent\textbf{Results on ImageNet and associated five shifts.}
In Table~\ref{tab:imagenet_shift}, our $\ours$ again surpasses the comparison methods, achieving the average accuracy of $64.4\%$ and $75.1\%$ with ViT-B/16 and ViT-L/14, respectively. Additionally, we observe improvements on the distribution shift datasets: IN-V2, IN-Sketch, IN-A, and IN-R with ViT-B/16, and on IN-A and IN-R with ViT-L/14, when our $\ours$ is combined with InMaP. However, these results still lag behind the original performance of our $\ours$. This discrepancy may stem from the hyper-parameters in InMaP being optimized specifically for ImageNet; applying them unchanged to its shifted datasets could lead to over-fitting. 

\subsection{Ablation Studies and Further Analysis}\label{sec:ablate}
\noindent\textbf{Effectiveness of the prompt distribution learning.} 
In Table~\ref{tab:lda_ensemble_debias} (Row (1) \& (3)), we compare the performance of the original CLIP model $f_\mathsf{c}$ with our prompt distribution learning model $f_\mathsf{g}$. We observe that modeling the underlying distribution of the text prototypes results in notable performance gains. For example, $3.7\%$ accuracy improvement on 10-datasets using ViT-B/16.

\noindent\textbf{Effectiveness of the prediction fusion.} 
As described in Eq.\eqref{eq:fused_pred}, our $\ours$ fuses the original CLIP $f_\mathsf{c}$ and the  
prompt distribution learning model $f_\mathsf{g}$ via confidence matching. 
We compare the simple fusion $f_\mathsf{c}+f_\mathsf{g}$ and our adaptive fusion $f_\mathsf{f}=f_\mathsf{c}/\tau_\mathsf{c} + f_\mathsf{g}/\tau_\mathsf{g}$ in Table~\ref{tab:lda_ensemble_debias} (Row (4) \& (5)). 
We show that our fusion technique outperforms the simple fusion by a large margin. 
Recall that our adaptive fusion method addresses situations where $f_\mathsf{g}$ is more overconfident than $f_\mathsf{c}$.
In Figure~\ref{fig:confidence_acc_curves}, we illustrate the relationship between performance gains over simple fusion—\ie, $\mathsf{Acc}(f_\mathsf{c}/\tau_\mathsf{c} + f_\mathsf{g}/\tau_\mathsf{g}) - \mathsf{Acc}(f_\mathsf{c} + f_\mathsf{g})$—and the confidence difference—\ie, $\left| \text{conf}(f_\mathsf{g}, 1) - \text{conf}(f_\mathsf{c}, \tau_\mathsf{c}) \right|$.
 We present this as a scatter plot where each point represents a dataset, and we have fitted these points with a line. As expected, larger confidence differences correlate with more significant improvements.

 \begin{table}[t]
\caption{Comparison of accuracy ($\%$) between our $\ours$ and other label bias correcting methods for CLIP
ViT-B/16.}
\label{tab:debias_cmp}
    \tabstyle{5.5pt}
    \centering
    \begin{tabular}{lcccccccccccc}
    \toprule
    Model & \rotatebox{90}{Pets}     & \rotatebox{90}{Flowers}   & \rotatebox{90}{Aircraft}   & \rotatebox{90}{DTD}      & \rotatebox{90}{EuroSAT}  &\rotatebox{90}{Cars}           & \rotatebox{90}{Food}  & \rotatebox{90}{SUN}       & \rotatebox{90}{Caltech} & \rotatebox{90}{UCF}  & \rotatebox{90}{ImageNet} & Avg. \\
    \midrule
    CLIP~\cite{clip}   & 89.1  &  71.4  &24.8  &44.3   &47.7 &65.2  &86.1 &62.5 	&92.9 &66.7 &68.7 &65.4 \\
    TDE~\cite{TDE} & 84.1 & 65.8 &27.4  &49.8   &55.3 &60.3  &84.6 &65.5    &91.6 &68.2 &65.9 &65.3 \\
    Implicit        &91.4 &71.4  &30.6  &54.2 &56.8  &66.0 &86.6  &69.5 &93.5    &72.6 &69.8 &69.3 \\
    \textbf{$\ours$}          &\textbf{92.9}  &\textbf{74.8}  &\textbf{31.4}  &\textbf{56.1}  &\textbf{58.5}  &\textbf{69.1} &\textbf{87.1}  &\textbf{70.8}   &\textbf{95.1}    &\textbf{75.2} &\textbf{70.9}   &\textbf{70.9} \\ \midrule
    \textcolor{gray}{Oracle $\ours$}   &\textcolor{gray}{93.1} &\textcolor{gray}{77.5}  &\textcolor{gray}{32.2}  &\textcolor{gray}{57.3} &\textcolor{gray}{59.8}  &\textcolor{gray}{69.8} &\textcolor{gray}{87.4}  &\textcolor{gray}{71.2} &\textcolor{gray}{95.7}   &\textcolor{gray}{76.3} &\textcolor{gray}{71.5} &\textcolor{gray}{71.9} \\
    \bottomrule
    \end{tabular}
\end{table}

\begin{figure*}
\begin{minipage}{0.45\textwidth}
    \centering
    \includegraphics[width=0.8\textwidth]{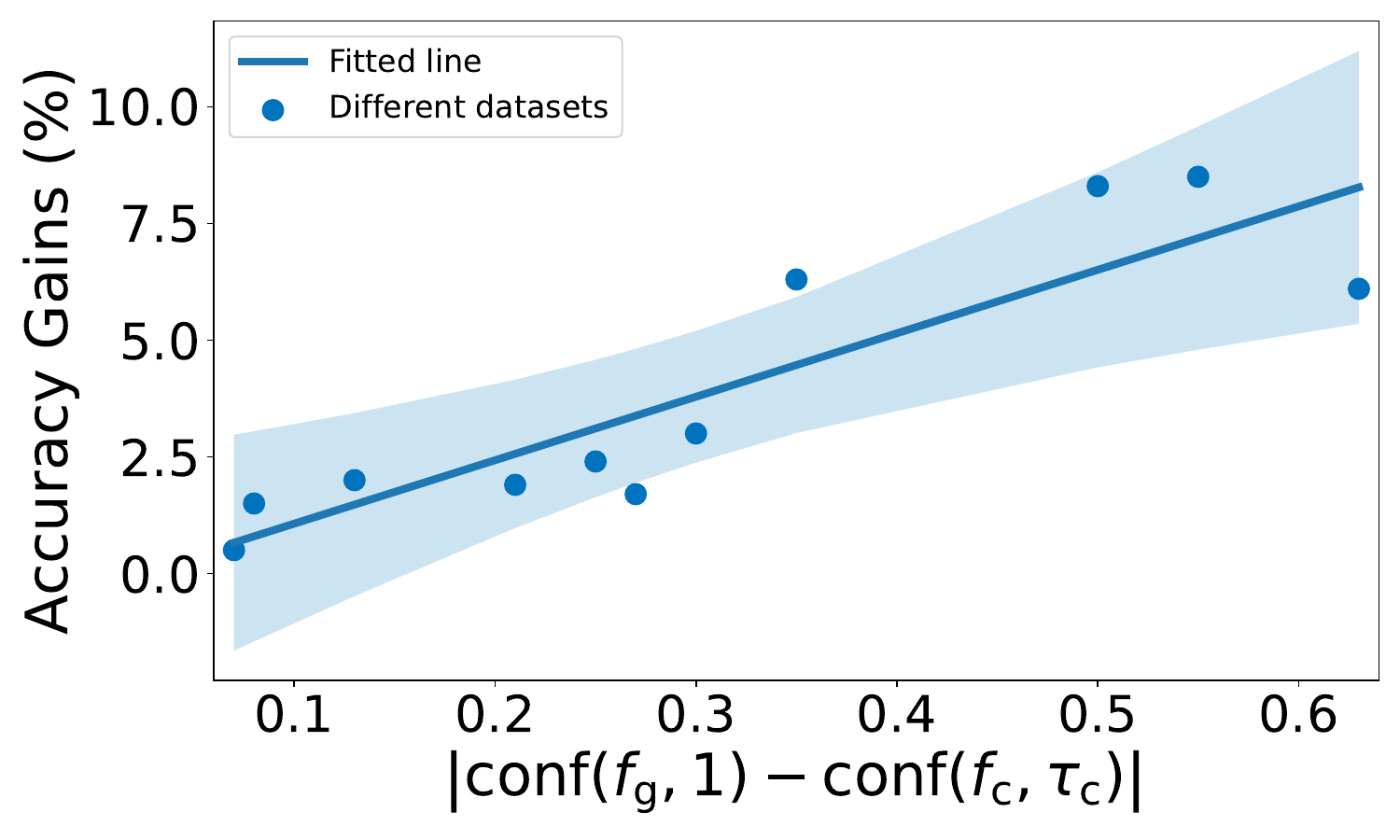}
    \vspace{-0.2cm}
    \caption{Relation between gains and confidence differences.}
    \label{fig:confidence_acc_curves}
\end{minipage}
\hfill
\begin{minipage}{0.45\textwidth}
    \centering
\includegraphics[width=0.8\textwidth]{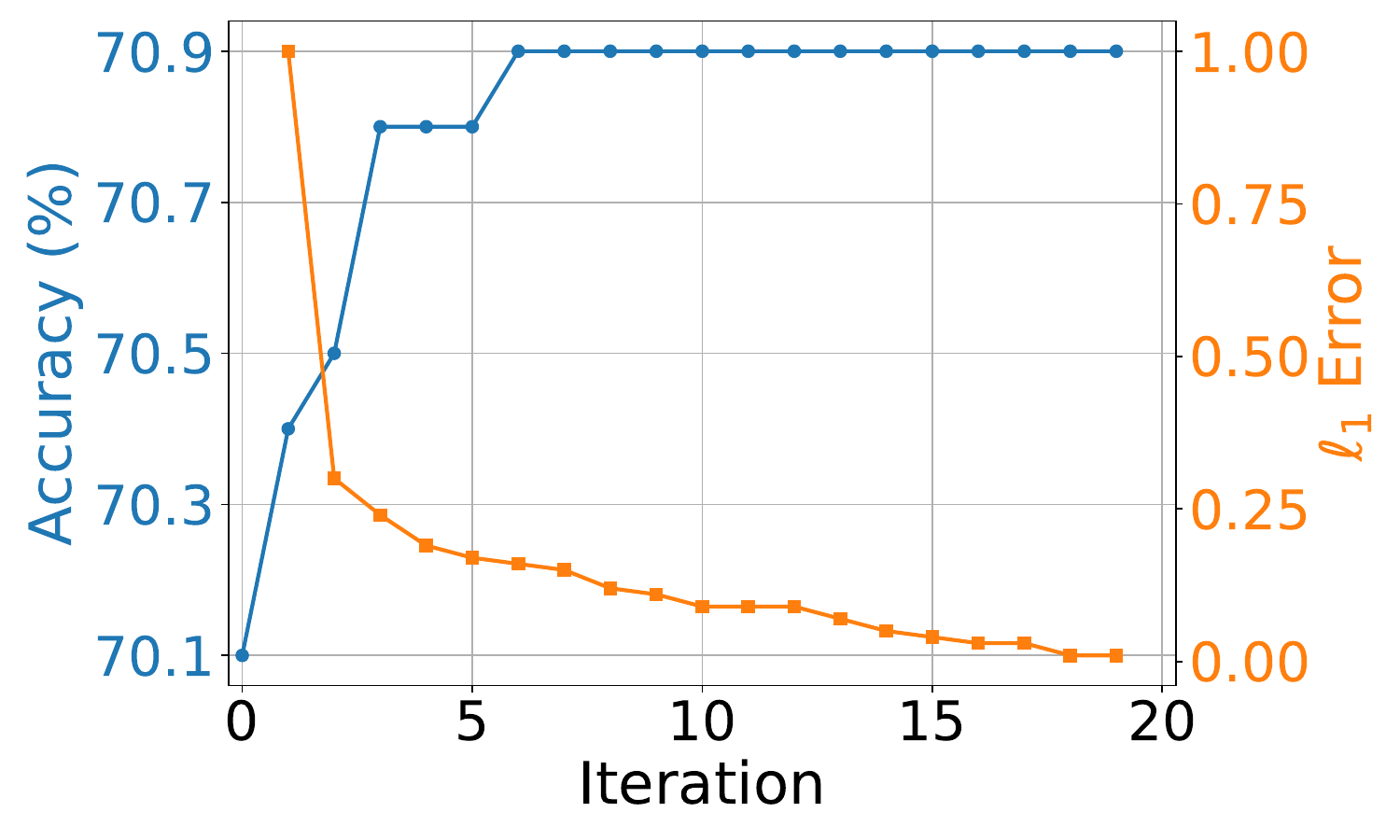}
    \vspace{-0.2cm}
    \caption{Convergence of accuracy and $\ell_1$ error of on ImageNet.}
    \label{fig:debias_acc_curve}
\end{minipage}
\hfill
\vspace{-0.4cm}
\end{figure*}

\noindent\textbf{Effectiveness of the bias correction.}
Row (2) and (6) in Table~\ref{tab:lda_ensemble_debias} demonstrate the effectiveness of our debiasing method, which can further improve the base CLIP model $f_\mathsf{c}$ and the fusion model $f_\mathsf{d}$ across various backbones and datasets. We also compare our debiasing method with other label bias correction methods in Table~\ref{tab:debias_cmp}. The descriptions of TDE~\cite{TDE} and the Implicit method can be found in Section~\ref{subsec:debias}. The results reveal that TDE~\cite{TDE} does not consistently perform well across all datasets. In contrast, while the implicit method using downstream data enhances zero-shot performance, it underperforms compared to our debiasing method, which shows an average gain of $1.6\%$ over the implicit method. To further assess our method's potential, we replaced pseudo-labeling with ground truth labels. The results reveal that the maximum achievable accuracy surpasses our method by $1.0\%$, highlighting the importance of our iterative approach for more accurate pseudo-labeling.

\noindent\textbf{Convergence of Algorithm~\ref{algo:all}.} 
Our method $\ours$, as described in Algorithm~\ref{algo:1}, iteratively solves for the prior $\bm{\beta}$. In Figure~\ref{fig:debias_acc_curve}, we examine the convergence by displaying the errors $\ell_1={\|\bm{\beta}^t-\bm{\beta}^{t-1}\|_1}$ and the accuracy across iterations. We find that the resultant accuracy saturates after only 6 steps, and the relative $\ell_1$ error decreases to less than $\epsilon = 0.01$ after 10 steps.

\noindent\textbf{Comparison with other prompt-based methods.} The popular prompt-based methods, such as CoOp~\cite{coop} and CoCoOp~\cite{cocoop},  require a training procedure with labeled samples while our method does not involve any training.  To ensure a fair comparison, we compare our Frolic with CoOp and CoCoOp on across-dataset results, where the CoOp and CoCoOp are trained only with the labeled samples from the ImageNet dataset and then directly tested on the remaining datasets. The results shown in Table~\ref{tab:pronpt_cmp} demonstrate that our Frolic not only avoids the complexities of training but also exhibits superior generalization performance compared to these methods.
 \begin{table}[hbp]
\caption{Comparison of accuracy ($\%$) between our $\ours$ and prompt-based methods for CLIP ViT-B/16. $*$ denotes our method built upon InMaP~\cite{InMaP}}
\label{tab:pronpt_cmp}
    \tabstyle{5.5pt}
    \centering
    \begin{tabular}{lccccccccccc}
    \toprule
    Model & \rotatebox{90}{ImageNet} & \rotatebox{90}{Pets}     & \rotatebox{90}{Flowers}   & \rotatebox{90}{Aircraft}   & \rotatebox{90}{DTD}      & \rotatebox{90}{EuroSAT}  &\rotatebox{90}{Cars}           & \rotatebox{90}{Food}  & \rotatebox{90}{SUN}       & \rotatebox{90}{Caltech} & \rotatebox{90}{UCF}    \\
    \midrule
    CoOp~\cite{coop}   &71.5  &93.7  &89.1  &64.5   &68.7 &85.3  &18.4 &64.1 	&41.9 &46.3 &66.5  \\
    CoCoOp~\cite{cocoop} & 71.0 &94.4 &90.1  &65.3   &71.8 &86.0  &22.9 &67.3    &45.7 &45.3 &68.2 \\
    \textbf{$\ours$}$^{*}$         &\textbf{73.3}  &\textbf{95.4}  &\textbf{93.6}  &\textbf{71.7}  &\textbf{74.3}  &\textbf{88.2} &\textbf{31.8}  &\textbf{72.8}   &\textbf{58.0}    &\textbf{65.3} &\textbf{75.9}  \\
    \bottomrule
    \end{tabular}
\end{table}

\noindent\textbf{Comparison with adapter-based methods.} The adapter-based methods, \emph{e.g.}, LFA~\cite{LFA} and Tip-Adapter~\cite{tip} boost the CLIP's generalization using labeled training samples. In contrast, our Frolic doesn't require any labeled samples. We evaluate our method with LFA and Tip-Adapert on the ImageNet and its variants dataset, where the LFA and Tip-Adapter only utilize the labeled samples from the ImageNet dataset. The results in Table~\ref{tab:adapter_cmp} show that our method achieves the best performance across all datasets with nearly 3\% improvements in averaged accuracy over LFA.
 \begin{table}[t]
\caption{Comparison of accuracy ($\%$) between our $\ours$ and adapter-based distribution methods for CLIP
ViT-B/16. $*$ denotes our method built upon InMaP~\cite{InMaP}}
\label{tab:adapter_cmp}
    \tabstyle{5.5pt}
    \centering
    \begin{tabular}{lcccccc}
    \toprule
    Model & {IN} & {IN-A}     & {IN-V2}   & {IN-R}   & {IN-Sketch}      & {Average}    \\
    \midrule
    LFA~\cite{LFA}   &72.6  &51.5  &64.7  &76.1   &48.0 & 62.5 \\
    Tip-Adapter~\cite{tip} & 70.5 &49.8  &63.1  &76.9   &48.1 &61.6 \\
    \textbf{$\ours$}$^{*}$         &\textbf{73.3}   &\textbf{52.8}  &\textbf{63.8}  &\textbf{79.6}  &\textbf{52.9} &\textbf{64.4} \\
    \bottomrule
    \end{tabular}
\end{table}

\begin{wraptable}{r}{0.4\textwidth}
\vspace{-0.6cm}
 \caption{Comparison of running time on ImageNet with ViT-B/16.}
 \label{tab:time}
    \tabstyle{3pt}
    \centering
    \begin{tabular}{lrc}
    \toprule
    Model       & Running Time  & Accuracy  \\
    \midrule
    CLIP~\cite{clip} &  6min&  68.7 \\
    TPT \cite{TPT} & 6h & 68.9   \\
    TDA \cite{TDA} & 15min  & 69.5 \\
    \textbf{$\ours$} &  6.5min & 71.1\\
    \bottomrule
    \end{tabular}
\end{wraptable}
\noindent\textbf{Running time.} 
Our method $\ours$ is completely training-free, unlike prompt tuning approaches such as TPT~\cite{TPT} and TDA~\cite{TDA}, which involve back-propagating through an expensive encoder during optimization. We assess the wall-clock time of $\ours$, TPT, and TDA in Table~\ref{tab:time}, using the CLIP ViT-B/16 model on ImageNet. These evaluations are conducted on a single NVIDIA A100 GPU. The results indicate that our method not only requires less time but also delivers superior performance.


\section{Societal Impact, Limitation and Conclusion}\label{sec:conclusion}
\noindent\textbf{Societal impact and limitation.} Models pre-trained on large-scale web-crawled datasets may incorporate knowledge from noisy or malicious samples.

\noindent\textbf{Limitation.} Our approach assumes that the feature representations follow a mixture of Gaussian; however, this assumption may not always hold. On the other hand, the quality and distribution of data used in pre-training can significantly impact the performance of pre-trained models. Our method relies on the capabilities of pre-trained models for downstream tasks, if the pre-trained knowledge differs from the downstream tasks, the efficacy of our method may be limited.
 
\noindent\textbf{Conclusion.} In this work, we propose label-\textbf{F}ree p\textbf{ro}mpt distribution \textbf{l}earning and b\textbf{i}as \textbf{c}orrection, dubbed as \textbf{Frolic}, framework to boost the performance of zero-shot models. Our $\ours$ models each class prototype via a Gaussian distribution and fuses the learned model with the original CLIP~\cite{clip} via confidence matching. The proposed framework further effectively removes the label bias without accessing to the pre-training data. 
Extensive experiments across various datasets demonstrate the effectiveness of our approach.

\begin{ack}
    The work is supported by the National Natural Science Foundation of China (Grants No. 62202439), and the National Research Foundation, Singapore
under its AI Singapore Programme (AISG Award No: AISG2-PhD-2021-01-002). This work is also supported by the advanced computing resources provided by the Supercomputing Center of the USTC.
\end{ack}

\bibliographystyle{plain}
\bibliography{cite}

\begin{thebibliography}{10}

\bibitem{simple_zero}
James~Urquhart Allingham, Jie Ren, Michael~W. Dusenberry, Xiuye Gu, Yin Cui, Dustin Tran, Jeremiah~Zhe Liu, and Balaji Lakshminarayanan.
\newblock A simple zero-shot prompt weighting technique to improve prompt ensembling in text-image models.
\newblock In {\em ICML}, 2023.

\bibitem{BarbuMALWGTK19}
Andrei Barbu, David Mayo, Julian Alverio, William Luo, Christopher Wang, Dan Gutfreund, Josh Tenenbaum, and Boris Katz.
\newblock Objectnet: {A} large-scale bias-controlled dataset for pushing the limits of object recognition models.
\newblock In {\em NeurIPS}, 2019.

\bibitem{bishop}
Christopher~M. Bishop.
\newblock {\em Pattern recognition and machine learning, 5th Edition}.
\newblock Information science and statistics. Springer, 2007.

\bibitem{Food-101}
Lukas Bossard, Matthieu Guillaumin, and Luc~Van Gool.
\newblock Food-101 - mining discriminative components with random forests.
\newblock In {\em ECCV}, 2014.

\bibitem{chen2024catastrophic}
Hao Chen, Bhiksha Raj, Xing Xie, and Jindong Wang.
\newblock On catastrophic inheritance of large foundation models.
\newblock {\em arXiv preprint arXiv:2402.01909}, 2024.

\bibitem{ChertiBWWIGSSJ23}
Mehdi Cherti, Romain Beaumont, Ross Wightman, Mitchell Wortsman, Gabriel Ilharco, Cade Gordon, Christoph Schuhmann, Ludwig Schmidt, and Jenia Jitsev.
\newblock Reproducible scaling laws for contrastive language-image learning.
\newblock In {\em {CVPR}}, 2023.

\bibitem{DTD}
Mircea Cimpoi, Subhransu Maji, Iasonas Kokkinos, Sammy Mohamed, and Andrea Vedaldi.
\newblock Describing textures in the wild.
\newblock In {\em CVPR}, 2014.

\bibitem{Imagenet}
Jia Deng, Wei Dong, Richard Socher, Li{-}Jia Li, Kai Li, and Li~Fei{-}Fei.
\newblock Imagenet: {A} large-scale hierarchical image database.
\newblock In {\em CVPR}, 2009.

\bibitem{Caltech}
Li~Fei{-}Fei, Rob Fergus, and Pietro Perona.
\newblock Learning generative visual models from few training examples: An incremental bayesian approach tested on 101 object categories.
\newblock In {\em {CVPR} Workshops}, 2004.

\bibitem{GuoPSW17}
Chuan Guo, Geoff Pleiss, Yu~Sun, and Kilian~Q. Weinberger.
\newblock On calibration of modern neural networks.
\newblock In {\em ICML}, 2017.

\bibitem{Eurosat}
Patrick Helber, Benjamin Bischke, Andreas Dengel, and Damian Borth.
\newblock Eurosat: {A} novel dataset and deep learning benchmark for land use and land cover classification.
\newblock {\em {IEEE} J. Sel. Top. Appl. Earth Obs. Remote. Sens.}, 12(7):2217--2226, 2019.

\bibitem{ImageNetR}
Dan Hendrycks, Steven Basart, Norman Mu, Saurav Kadavath, Frank Wang, Evan Dorundo, Rahul Desai, Tyler Zhu, Samyak Parajuli, Mike Guo, Dawn Song, Jacob Steinhardt, and Justin Gilmer.
\newblock The many faces of robustness: A critical analysis of out-of-distribution generalization.
\newblock In {\em ICCV}, 2021.

\bibitem{ImageNetA}
Dan Hendrycks, Kevin Zhao, Steven Basart, Jacob Steinhardt, and Dawn Song.
\newblock Natural adversarial examples.
\newblock In {\em CVPR}, 2021.

\bibitem{HongHCSKC21}
Youngkyu Hong, Seungju Han, Kwanghee Choi, Seokjun Seo, Beomsu Kim, and Buru Chang.
\newblock Disentangling label distribution for long-tailed visual recognition.
\newblock In {\em {CVPR}}, 2021.

\bibitem{TDA}
Adilbek Karmanov, Dayan Guan, Shijian Lu, Abdulmotaleb El~Saddik, and Eric Xing.
\newblock Efficient test-time adaptation of vision-language models.
\newblock In {\em CVPR}, 2024.

\bibitem{Cars}
Jonathan Krause, Michael Stark, Jia Deng, and Li~Fei{-}Fei.
\newblock 3d object representations for fine-grained categorization.
\newblock In {\em {ICCV} Workshops}, 2013.

\bibitem{kumar2022calibrated}
Ananya Kumar, Tengyu Ma, Percy Liang, and Aditi Raghunathan.
\newblock Calibrated ensembles can mitigate accuracy tradeoffs under distribution shift.
\newblock In {\em UAI}, 2022.

\bibitem{LuLZL022}
Yuning Lu, Jianzhuang Liu, Yonggang Zhang, Yajing Liu, and Xinmei Tian.
\newblock Prompt distribution learning.
\newblock In {\em CVPR}, 2022.

\bibitem{Recalibration}
Rachel Luo, Shengjia Zhao, Jiaming Song, Jonathan Kuck, Stefano Ermon, and Silvio Savarese.
\newblock Privacy preserving recalibration under domain shift.
\newblock {\em CoRR}, abs/2008.09643, 2020.

\bibitem{FGVC}
Subhransu Maji, Esa Rahtu, Juho Kannala, Matthew~B. Blaschko, and Andrea Vedaldi.
\newblock Fine-grained visual classification of aircraft.
\newblock {\em CoRR}, abs/1306.5151, 2013.

\bibitem{MenonJRJVK21}
Aditya~Krishna Menon, Sadeep Jayasumana, Ankit~Singh Rawat, Himanshu Jain, Andreas Veit, and Sanjiv Kumar.
\newblock Long-tail learning via logit adjustment.
\newblock In {\em ICLR}, 2021.

\bibitem{mises1929praktische}
RV~Mises and Hilda Pollaczek-Geiringer.
\newblock Praktische verfahren der gleichungsaufl{\"o}sung.
\newblock {\em ZAMM-Journal of Applied Mathematics and Mechanics/Zeitschrift f{\"u}r Angewandte Mathematik und Mechanik}, 9(1):58--77, 1929.

\bibitem{Flower}
Maria{-}Elena Nilsback and Andrew Zisserman.
\newblock Automated flower classification over a large number of classes.
\newblock In {\em ICVGIP}, 2008.

\bibitem{LFA}
Yassine Ouali, Adrian Bulat, Brais Mart{\'{\i}}nez, and Georgios Tzimiropoulos.
\newblock Black box few-shot adaptation for vision-language models.
\newblock {\em CoRR}, abs/2304.01752, 2023.

\bibitem{parashar2024neglected}
Shubham Parashar, Zhiqiu Lin, Tian Liu, Xiangjue Dong, Yanan Li, Deva Ramanan, James Caverlee, and Shu Kong.
\newblock The neglected tails of vision-language models.
\newblock {\em arXiv preprint arXiv:2401.12425}, 2024.

\bibitem{OxfordPet}
Omkar~M. Parkhi, Andrea Vedaldi, Andrew Zisserman, and C.~V. Jawahar.
\newblock Cats and dogs.
\newblock In {\em CVPR}, 2012.

\bibitem{CuPL}
Sarah~M. Pratt, Ian Covert, Rosanne Liu, and Ali Farhadi.
\newblock What does a platypus look like? generating customized prompts for zero-shot image classification.
\newblock In {\em ICCV}, 2023.

\bibitem{InMaP}
Qi~Qian, Yuanhong Xu, and Juhua Hu.
\newblock Intra-modal proxy learning for zero-shot visual categorization with {CLIP}.
\newblock In {\em NeurIPS}, 2023.

\bibitem{clip}
Alec Radford, Jong~Wook Kim, Chris Hallacy, Aditya Ramesh, Gabriel Goh, Sandhini Agarwal, Girish Sastry, Amanda Askell, Pamela Mishkin, Jack Clark, Gretchen Krueger, and Ilya Sutskever.
\newblock Learning transferable visual models from natural language supervision.
\newblock In {\em ICML}, 2021.

\bibitem{ImageNetV2}
Benjamin Recht, Rebecca Roelofs, Ludwig Schmidt, and Vaishaal Shankar.
\newblock Do imagenet classifiers generalize to imagenet?
\newblock In {\em {ICML}}, 2019.

\bibitem{ProAlign}
Jameel~Abdul Samadh, Hanan Gani, Noor Hussein, Muhammad~Uzair Khattak, Muzammal Naseer, Fahad~Shahbaz Khan, and Salman~H. Khan.
\newblock Align your prompts: Test-time prompting with distribution alignment for zero-shot generalization.
\newblock In {\em NeurIPS}, 2023.

\bibitem{TPT}
Manli Shu, Weili Nie, De{-}An Huang, Zhiding Yu, Tom Goldstein, Anima Anandkumar, and Chaowei Xiao.
\newblock Test-time prompt tuning for zero-shot generalization in vision-language models.
\newblock In {\em NeurIPS}, 2022.

\bibitem{UCF101}
Khurram Soomro, Amir~Roshan Zamir, and Mubarak Shah.
\newblock {UCF101:} {A} dataset of 101 human actions classes from videos in the wild.
\newblock {\em CoRR}, abs/1212.0402, 2012.

\bibitem{TDE}
Kaihua Tang, Jianqiang Huang, and Hanwang Zhang.
\newblock Long-tailed classification by keeping the good and removing the bad momentum causal effect.
\newblock In {\em NeurIPS}, 2020.

\bibitem{SuS-X}
Vishaal Udandarao, Ankush Gupta, and Samuel Albanie.
\newblock Sus-x: Training-free name-only transfer of vision-language models.
\newblock {\em CoRR}, abs/2211.16198, 2022.

\bibitem{ImageNetSketch}
Haohan Wang, Songwei Ge, Zachary Lipton, and Eric~P Xing.
\newblock Learning robust global representations by penalizing local predictive power.
\newblock In {\em Advances in Neural Information Processing Systems}, pages 10506--10518, 2019.

\bibitem{wang2022debiased}
Xudong Wang, Zhirong Wu, Long Lian, and Stella~X Yu.
\newblock Debiased learning from naturally imbalanced pseudo-labels.
\newblock In {\em CVPR}, 2022.

\bibitem{wang2024a}
Zhengbo Wang, Jian Liang, Lijun Sheng, Ran He, Zilei Wang, and Tieniu Tan.
\newblock A hard-to-beat baseline for training-free {CLIP}-based adaptation.
\newblock In {\em ICLR}, 2024.

\bibitem{wang2023bi}
Zhicai Wang, Yanbin Hao, Tingting Mu, Ouxiang Li, Shuo Wang, and Xiangnan He.
\newblock Bi-directional distribution alignment for transductive zero-shot learning.
\newblock In {\em CVPR}, 2023.

\bibitem{WortsmanIKLKRLH22}
Mitchell Wortsman, Gabriel Ilharco, Jong~Wook Kim, Mike Li, Simon Kornblith, Rebecca Roelofs, Raphael~Gontijo Lopes, Hannaneh Hajishirzi, Ali Farhadi, Hongseok Namkoong, and Ludwig Schmidt.
\newblock Robust fine-tuning of zero-shot models.
\newblock In {\em CVPR}, 2022.

\bibitem{GPT4V}
Wenhao Wu, Huanjin Yao, Mengxi Zhang, Yuxin Song, Wanli Ouyang, and Jingdong Wang.
\newblock Gpt4vis: What can {GPT-4} do for zero-shot visual recognition?
\newblock {\em CoRR}, abs/2311.15732, 2023.

\bibitem{SUN}
Jianxiong Xiao, James Hays, Krista~A. Ehinger, Aude Oliva, and Antonio Torralba.
\newblock {SUN} database: Large-scale scene recognition from abbey to zoo.
\newblock In {\em CVPR}, 2010.

\bibitem{yi2023invariant}
Xuanyu Yi, Jiajun Deng, Qianru Sun, Xian-Sheng Hua, Joo-Hwee Lim, and Hanwang Zhang.
\newblock Invariant training 2d-3d joint hard samples for few-shot point cloud recognition.
\newblock In {\em ICCV}, 2023.

\bibitem{tip}
Renrui Zhang, Wei Zhang, Rongyao Fang, Peng Gao, Kunchang Li, Jifeng Dai, Yu~Qiao, and Hongsheng Li.
\newblock Tip-adapter: Training-free adaption of {CLIP} for few-shot classification.
\newblock In {\em ECCV}, 2022.

\bibitem{cocoop}
Kaiyang Zhou, Jingkang Yang, Chen~Change Loy, and Ziwei Liu.
\newblock Conditional prompt learning for vision-language models.
\newblock In {\em CVPR}, 2022.

\bibitem{coop}
Kaiyang Zhou, Jingkang Yang, Chen~Change Loy, and Ziwei Liu.
\newblock Learning to prompt for vision-language models.
\newblock {\em IJCV}, 130(9):2337--2348, 2022.

\bibitem{prograd}
Beier Zhu, Yulei Niu, Yucheng Han, Yue Wu, and Hanwang Zhang.
\newblock Prompt-aligned gradient for prompt tuning.
\newblock In {\em ICCV}, 2023.

\bibitem{zhu2022cross}
Beier Zhu, Yulei Niu, Xian-Sheng Hua, and Hanwang Zhang.
\newblock Cross-domain empirical risk minimization for unbiased long-tailed classification.
\newblock In {\em AAAI}, 2022.

\bibitem{ZhuTSZ23}
Beier Zhu, Kaihua Tang, Qianru Sun, and Hanwang Zhang.
\newblock Generalized logit adjustment: Calibrating fine-tuned models by removing label bias in foundation models.
\newblock In {\em NeurIPS}, 2023.

\bibitem{SSP}
Xingyu Zhu, Beier Zhu, Yi~Tan, Shuo Wang, Yanbin Hao, and Hanwang Zhang.
\newblock Selective vision-language subspace projection for few-shot {CLIP}.
\newblock In {\em ACM MM}, 2024.

\end{thebibliography}

\newpage

\appendix

\section{Theoretical Analysis}\label{sec:proof}

\subsection{Proof of Eq.~(\ref{eq:sigma}): Estimation of Class Covariance from Marginal Second Order Moment}\label{sec:proof-LFPDL}
We first derive the second order moments for a multivariate Gaussian and then for a Gaussian mixture, corresponding to the marginal distribution of $\mathbb{P}(\x)$.

For a class $j$ with parameters $\z_j$ and $\Sigma$, the conditional probability density function is given by:
\begin{equation}
 \mathcal{N}(\x;\z_j,\Sigma)= \frac{1}{\sqrt{(2\pi)^d|\Sigma|}}\exp\{-\frac{1}{2}(\x - \z_j)^\top\Sigma^{-1}(\x - \z_j)\}
\end{equation}
The second order moment generating function for class $j$ is:
\begin{align}
 M_j&=\mathbb{E}_{\x\in \mathcal{C}_j}[\x\x^\top]=\int_{\x}\mathcal{N}(\x;\z_j,\Sigma)\x\x^\top \text{d}\x \\
    &=\frac{1}{\sqrt{(2\pi)^d|\Sigma|}}\int_{\x} \exp \{-\frac{1}{2}(\x - \z_j)^\top\Sigma^{-1}(\x - \z_j)\}\x\x^\top \text{d}\x \\
    &\overset{(a)}{=}\frac{1}{\sqrt{(2\pi)^d|\Sigma|}}\int_{\y} \exp \{-\frac{1}{2}\y^\top\Sigma^{-1}\y\}(\y+\z_j)(\y+\z_j)^\top \text{d}\y \\ 
    &=\frac{1}{\sqrt{(2\pi)^d|\Sigma|}}\int_{\y} \exp \{-\frac{1}{2}\y^\top\Sigma^{-1}\y\}(\y\y^\top+\underbrace{\y\z_j^\top+\z_j\y^\top}_{\text{vanish by symmetry}}+\z_j\z_j^\top) \text{d}\y \\
    &\overset{(b)}{=}\frac{1}{\sqrt{(2\pi)^d|\Sigma|}}\int_{\y} \exp \{-\frac{1}{2}\y^\top\Sigma^{-1}\y\}(\y\y^\top+\z_j\z_j^\top) \text{d}\y \\
    &\overset{(c)}{=}\z_j\z_j^\top + \frac{1}{\sqrt{(2\pi)^d|\Sigma|}}\int_{\y} \exp \{-\frac{1}{2}\y^\top\Sigma^{-1}\y\}(\y\y^\top) \text{d}\y. \label{eq:beforeinverseS}
\end{align}
$\overset{(a)}{=}$ holds as we change the integral variables $\y=\x-\z_j$. We have $\overset{(b)}{=}$ because the $\exp(\cdot)$ function is an even function of $\y$ and the factors $\y\z_j^\top$ and $\z_j\y^\top$ will vanish during integral by symmetry. For $\overset{(c)}{=}$, we take the term $\z_j\z_j^\top$ outside of the integral as they are constant. 

The covariance matrix $\Sigma$ and its inverse matrix $\Sigma^{-1}$ the can be expressed through an expansion in terms of its eigenvalues $\{\lambda_i\}_{i=1}^d$ and eigenvectors $\{\mathbf{u}_i\}_{i=1}^d$:
\begin{equation}
    \Sigma=\sum_{i=1}^d \lambda_i \mathbf{u}_i\mathbf{u}_i^\top, \quad \Sigma^{-1}=\sum_{i=1}^d \frac{1}{\lambda_i} \mathbf{u}_i\mathbf{u}_i^\top
\end{equation}

Similarly, we can decompose $\y$ using the set of eigenvectors:
$\y=\sum_{j=1}^de_j\mathbf{u}_j$, where $e_j=\mathbf{u}_j^T\y$. (We temporarily abuse the subscript $j$ here. It does \textit{not} represent class $j$ until we reach Eq.~\eqref{eq:getSigma}) We have the following expression:
\begin{align}
\y\y^\top&=\sum_{i=1}^d\sum_{j=1}^de_ie_j\mathbf{u}_i\mathbf{u}_j^\top \label{eq:yy}\\ \y^\top\Sigma^{-1}\y&=\sum_{i=1}^de_i\mathbf{u}_i^\top \sum_{k=1}^d \frac{1}{\lambda_k} \mathbf{u}_k\mathbf{u}_k^\top\sum_{j=1}^de_j\mathbf{u}_j \overset{(d)}{=}\sum_{k=1}^d(\frac{e_k}{\sqrt{\lambda_k}})^2 \label{eq:ySy}
\end{align}
We obtain $\overset{(d)}{=}$ due to the property of eigenvalues, \ie, $\mathbf{u}_i^\top\mathbf{u}_i=1$ and $\mathbf{u}_i^\top\mathbf{u}_j=0,\ \text{for}\ i\neq j$. Denote $U=[\mathbf{u}_1,...,\mathbf{u}_d]^\top$, we have $\e=U\y$. As the determinant $|U|=1$, the probability density after transformed remains unchanged: $\mathbb{P}(\e)=|U|^{-1}\mathbb{P}(\y)=\mathbb{P}(\y)$. Apply Eq.~\eqref{eq:yy} and Eq.~\eqref{eq:ySy} into Eq.~\eqref{eq:beforeinverseS}, we have:
\begin{align}
    &\frac{1}{\sqrt{(2\pi)^d|\Sigma|}}\int_{\y} \exp \{-\frac{1}{2}\y^\top\Sigma^{-1}\y\}(\y\y^\top) \text{d}\y \\
    =&\frac{1}{\sqrt{(2\pi)^d|\Sigma|}}\sum_{i=1}^d\sum_{j=1}^d\mathbf{u}_i\mathbf{u}_j^\top\int_{\e}\exp\{\sum_{k=1}^d-\frac{1}{2}(\frac{e_k}{\sqrt{\lambda_k}})^2\}e_ie_j\text{d}\e \\
    =&\frac{1}{\sqrt{(2\pi)^d|\Sigma|}}\sum_{i=1}^d\sum_{j=1}^d\mathbf{u}_i\mathbf{u}_j^\top\int_{\e}\prod_{k=1}^d \exp\{-\frac{1}{2}(\frac{e_k}{\sqrt{\lambda_k}})^2\}e_ie_j\text{d}\e \\
    \overset{(e)}{=}&\frac{1}{\sqrt{(2\pi)^d|\Sigma|}}\sum_{i=1}^d\mathbf{u}_i\mathbf{u}_i^\top\int_{e_i}\exp\{-\frac{1}{2}(\frac{e_i}{\sqrt{\lambda_i}})^2\}e_i^2\text{d}e_i \\
    \overset{(f)}{=}&\sum_{i=1}^d\mathbf{u}_i\mathbf{u}_i^\top\int_{e_i}\frac{1}{\sqrt{2\pi\lambda_i}} \exp\{-\frac{1}{2}(\frac{e_i}{\sqrt{\lambda_i}})^2\}e_i^2\text{d}e_i \\
    \overset{(g)}{=}&\sum_{i=1}^d\mathbf{u}_i\mathbf{u}_i^\top\lambda_i=\Sigma \label{eq:getSigma} 
\end{align}
For $\overset{(e)}{=}$, the terms $i\neq j$ disappear by symmetry similar to  $\overset{(b)}{=}$. We make use of $|\Sigma|=\prod_{i=1}^d\lambda_i$ for $\overset{(f)}{=}$. We have $\overset{(g)}{=}$ because we regard $e_i \sim \mathcal{N}(0,\sqrt{\lambda_i})$ and note that $\mathbb{E}[e_i^2]=\mathsf{var}[e_i]+\mathbb{E}[e_i]^2=\lambda_i+0=\lambda_i$. Combining Eq.~\eqref{eq:getSigma} with Eq.~\eqref{eq:beforeinverseS}, we have the second order moment for class $j$ is:
\begin{equation}
    M_j=\z_j\z_j^\top + \Sigma.
\end{equation}

Using a Gaussian mixture model with the priors $\{\pi_j\}_j^K$, $\mathbb{P}(\x)$ is given by:
\begin{equation}
    \mathbb{P}(\x)=\sum_{j=1}^K\pi_j\mathcal{N}(\x;\z_j,\Sigma).
\end{equation}
The second order moment for the marginal distribution $\mathbb{P}(\x)$ is:
\begin{align}
    M&=\mathbb{E}[\x\x^\top]=\int_\x \sum_{j=1}^K\pi_j\mathcal{N}(\x;\z_j,\Sigma)\text{d}\x\x^\top\x \\
    &=\sum_{j=1}^K\pi_j \int_\x \mathcal{N}(\x;\z_j,\Sigma)\x\x^\top\text{d}\x \\
    &=\sum_{j=1}^K\pi_j M_j=\sum_{j=1}^K\pi_j(\z_j\z_j^\top + \Sigma)\\
    &=\sum_{j=1}^K\pi_j\z_j\z_j^\top+(\sum_{j=1}^K\pi_j)\Sigma=\Sigma+\sum_{j=1}^K\pi_j\z_j\z_j^\top 
\end{align}
\hfill \qed

\subsection{Proof of Eq.~(\ref{eq:est_pi}): Estimation of the Priors of Gaussian Mixture Models}\label{sec:proof-pi}
The expectation of $\x$ is defined as:
\begin{equation}\label{eq:exp_x}
    \mathbb{E}[\x]=\int_\x \sum_{j=1}^K\pi_j\mathcal{N}(\x;\z_j,\Sigma)\x\text{d}\x= \sum_{j=1}^K\pi_j\int_\x\mathcal{N}(\x;\z_j,\Sigma)\x\text{d}\x= \sum_{j=1}^K\pi_j \z_j
\end{equation}
Denote $\bm{\pi}=[\pi_1,..,\pi_K]^\top$, $Z=[\z_1,..,\z_K]^T$, and the expectation of $\x$ as $\bm{\mu}$, Eq.~\eqref{eq:exp_x} can be rewrite as:
\begin{equation}
    \bm{\mu} = Z\bm{\pi}.
\end{equation}
Therefore, the priors can be solve by $\bm{\pi}=Z^{-1}\bm{\mu}$.
\hfill \qed

\subsection{Proof of Eq.~(\ref{eq:sg_label}): Parameters of our Learned Model}\label{sec:proof_sg_label}
The posterior of classes $\mathbb{P}(y|\x)$ can be expression as:
\begin{equation}
    \mathbb{P}(y|\x)=\frac{\mathbb{P}(\x|y)\mathbb{P}(y)}{\mathbb{P}(\x)} \propto \mathcal{N}(\x;\z_y,\Sigma)\pi_y. 
\end{equation}
To classify $\x$, we seek the class $y$ that maximizes this posterior. Since the term $\mathbb{P}(\x)$ does not depend on $y$, we can simplify our task to maximizing $\mathcal{N}(\x; \mu_y, \Sigma)\pi_y$. Taking natural logarithms gives:
\begin{align}
    \ln{\mathcal{N}(\x;\z_y,\Sigma)\pi_y} &=\ln{\frac{1}{\sqrt{(2\pi)^d|\Sigma|}}\exp\{-\frac{1}{2}(\x - \z_y)^\top\Sigma^{-1}(\x - \z_y)\}\pi_y} \\
    &=\ln{\frac{1}{\sqrt{(2\pi)^d|\Sigma|}}} -\frac{1}{2}(\x - \z_y)^\top\Sigma^{-1}(\x - \z_y)+ \ln{\pi_y} \label{eq:c1} \\
    &=c_1-\frac{1}{2}\x^T\Sigma^{-1}\x+\z_y^\top\Sigma^{-1}\x -\frac{1}{2}\z_y^T\Sigma^{-1}\z_y + c_2 \\
    &=\z_y^\top\Sigma^{-1}\x -\frac{1}{2}\z_y^T\Sigma^{-1}\z_y+ c  \label{eq:absort_all}\\
    &=\w_y^T\x+b_y + c \label{eq:wx_b}
\end{align}
The first term in Equation \eqref{eq:c1} is constant; we incorporate it using a constant $c_1$. Consider that most test benchmarks are generally class-balanced, we use a uniform prior $c_2$ to incorporate $\ln{\pi_y}$. In Eq.~\eqref{eq:absort_all}, we use $c$ to absorb all constant terms, including $c_1, c_2$ and $-\frac{1}{2}\x^T\Sigma^{-1}\x$. Let $\w_j = \hat{\Sigma}^{-1} \z_j$ and $b_j = -\frac{1}{2} \z_j^\top \w_j$, we get Eq.~\eqref{eq:wx_b}. 
\hfill \qed

\subsection{Proof of Eq.~(\ref{eq:debias}): Debiased Classifier for Downstream Data}\label{sec:debias}
\begin{proposition}
(Modified from Theorem 1 in~\cite{HongHCSKC21}). Let $\mathbb{P}_\mathsf{pt}(y|\x)$ and $\mathbb{P}_\mathsf{ds}(y|\x)$ be the distributions of the pre-train and downstream data, respectively.
Let $\beta_y=\mathbb{P}_\mathsf{pt}(y)$ and $\pi_y=\mathbb{P}_\mathsf{ds}(y)$ denote the priors of the pre-train and the downstream data, respectively.
Assume the likelihood $\mathbb{P}(\x|y)$ is unchanged between pre-train and downstream data, \ie, $\mathbb{P}(\x|y)=\mathbb{P}_\mathsf{pt}(\x|y)=\mathbb{P}_\mathsf{ds}(\x|y)$. If $f_\mathsf{pt}(\x)_y$ is the logit of class $y$ from the softmax model to estimate $\mathbb{P}_\mathsf{pt}(y|\x)$, then the estimated $\mathbb{P}_\mathsf{ds}(y|\x)$ is formulated as:
\begin{equation}
    \mathbb{P}_\mathsf{ds}(y|\x)=\softmax(f_\mathsf{pt}(\x)-\ln{\bm{\beta}} + \ln{\bm{\pi}})_y,
\end{equation}
where $\bm{\beta}=[\beta_1,...,\beta_K]$ and $\bm{\pi}=[\pi_1,...,\pi_K]$.
\end{proposition}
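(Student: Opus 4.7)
The plan is to chain two applications of Bayes' rule, one for the pre-training distribution and one for the downstream distribution, and exploit the shift-invariance of softmax to absorb all $\x$-dependent nuisance terms.

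First I would unpack the assumption that $f_\mathsf{pt}$ represents $\mathbb{P}_\mathsf{pt}(y|\x)$ through a softmax: this means $\mathbb{P}_\mathsf{pt}(y|\x)\propto \exp(f_\mathsf{pt}(\x)_y)$, so up to an additive function $c(\x)$ depending only on $\x$, we have $f_\mathsf{pt}(\x)_y = \ln \mathbb{P}_\mathsf{pt}(y|\x) + c(\x)$. Applying Bayes' rule to the pre-training distribution gives $\mathbb{P}_\mathsf{pt}(y|\x)\propto \mathbb{P}(\x|y)\beta_y$, so
\begin{equation*}
f_\mathsf{pt}(\x)_y = \ln \mathbb{P}(\x|y) + \ln \beta_y + c'(\x),
\end{equation*}
where $c'(\x)$ lumps together $c(\x)$ and the $-\ln \mathbb{P}_\mathsf{pt}(\x)$ factor from the Bayes denominator (both depend only on $\x$, not on $y$).

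Next I would subtract $\ln \beta_y$ and add $\ln \pi_y$ componentwise to obtain
\begin{equation*}
f_\mathsf{pt}(\x)_y - \ln \beta_y + \ln \pi_y = \ln \mathbb{P}(\x|y) + \ln \pi_y + c'(\x).
\end{equation*}
Because the likelihood is shared across domains, the right-hand side is exactly $\ln(\mathbb{P}(\x|y)\pi_y)$ up to the $\x$-only term $c'(\x)$, and by Bayes' rule on the downstream side this equals $\ln \mathbb{P}_\mathsf{ds}(y|\x)$ plus an $\x$-only term. Feeding this vector into softmax cancels every $y$-independent contribution, yielding
\begin{equation*}
\softmax(f_\mathsf{pt}(\x)-\ln \bm{\beta}+\ln \bm{\pi})_y = \mathbb{P}_\mathsf{ds}(y|\x),
\end{equation*}
as claimed.

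There is no real obstacle here beyond bookkeeping: the only subtlety is tracking that every term depending solely on $\x$ (the softmax normalizer of $f_\mathsf{pt}$, the marginals $\mathbb{P}_\mathsf{pt}(\x)$ and $\mathbb{P}_\mathsf{ds}(\x)$ appearing through Bayes) is absorbed by softmax's invariance to adding a scalar to all coordinates. It is worth remarking in the writeup that the proposition reduces to the standard logit-adjustment identity in the special case $\pi_y=1/K$ (uniform downstream prior), recovering Eq.~(\ref{eq:debias}) up to the irrelevant additive constant $-\ln K$.
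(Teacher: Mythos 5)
Your proof is correct and is essentially the same argument as the paper's: both apply Bayes' rule on each domain, invoke the shared-likelihood assumption to cancel $\mathbb{P}(\x|y)$, and discard all $y$-independent terms. The only cosmetic difference is that you work in log-space and appeal to the shift-invariance of $\softmax$, whereas the paper stays in probability space and handles the normalizer $Z$ explicitly via the requirement that $\sum_y \mathbb{P}_\mathsf{ds}(y|\x)=1$; these are the same step in two notations.
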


\begin{proof}
\begin{align}
    \mathbb{P}_\mathsf{ds}(y|\x)&=\frac{\mathbb{P}_\mathsf{ds}(\x|y)\mathbb{P}_\mathsf{ds}(y)}{\mathbb{P}_\mathsf{ds}(\x)}=\frac{\mathbb{P}_\mathsf{pt}(\x|y)\mathbb{P}_\mathsf{ds}(y)}{\mathbb{P}_\mathsf{ds}(\x)} \\
    &=\frac{\mathbb{P}_\mathsf{pt}(\x|y)\mathbb{P}_\mathsf{pt}(y)}{\mathbb{P}_\mathsf{pt}(\x)}\frac{\mathbb{P}_\mathsf{ds}(y)}{\mathbb{P}_\mathsf{pt}(y)}\frac{\mathbb{P}_\mathsf{pt}(\x)}{\mathbb{P}_\mathsf{ds}(\x)} \\
    &\overset{(a)}{=}\mathbb{P}_\mathsf{pt}(y|\x)\frac{\pi_y}{\beta_y}\frac{1}{Z}  =\softmax(f_\mathsf{pt}(\x))_y \frac{\pi_y}{\beta_y}\frac{1}{Z} \\
    &=\frac{\exp(f_\mathsf{pt}(\x)_y)}{Z\sum_{j=1}^K \exp(f_\mathsf{pt}(\x)_j)}\frac{\exp{(\ln{\pi_y})}}{\exp{(\ln{\beta_y})}} \\
    &=\frac{\exp(f_\mathsf{pt}(\x)_y-\ln{\beta_y}+\ln{\pi_y})}{Z\sum_{j=1}^K \exp(f_\mathsf{pt}(\x)_j)} \\
    &\overset{(b)}{=}\frac{\exp(f_\mathsf{pt}(\x)_y-\ln{\beta_y}+\ln{\pi_y})}{\sum_{j=1}^K \exp(f_\mathsf{pt}(\x)_j-\ln{\beta_j}+\ln{\pi_j})} \\
    &=\softmax(f_\mathsf{pt}(\x)-\ln{\bm{\beta}} + \ln{\bm{\pi}})_y.
\end{align}
For $\overset{(a)}{=}$, we denote the term that is not related to $y$ as $\frac{1}{Z}=\frac{\mathbb{P}_\mathsf{pt}(\x)}{\mathbb{P}_\mathsf{ds}(\x)}$. We derive $\overset{(b)}{=}$ from the requirement that $\mathbb{P}_\mathsf{ds}(y|\x)$, being a probability, must sum to 1 across all possible classes $y\in [K]$:
\begin{equation}
    \sum_{i=1}^K\mathbb{P}_\mathsf{ds}(i|\x)=\frac{\sum_{i=1}^K\exp(f_\mathsf{pt}(\x)_i-\ln{\beta_i}+\ln{\pi_i})}{Z\sum_{j=1}^K \exp(f_\mathsf{pt}(\x)_j)}=1.
\end{equation}
Therefore, we have $Z\sum_{j=1}^K \exp(f_\mathsf{pt}(\x)_j)=\sum_{i=1}^K\exp(f_\mathsf{pt}(\x)_i-\ln{\beta_i}+\ln{\pi_i})$. In our context, the pre-trained model $f_\mathsf{pt}$  is equivalent to our $f_\mathsf{f}$.
\end{proof}

\subsection{Proof of Eq.~(\ref{eq:qP=q}): Equation to Estimate Pre-training Priors}\label{sec:proveqP=q}
\begin{proposition}\label{prop:2}
    Let $s(\x)=[\mathbb{P}(Y=1|\x),...,\mathbb{P}(Y=K|\x)]^\top\in\mathbb{R}^K$ be the likelihood vector, $\s_j=\mathbb{E}_{\x|Y=j}[s(\x)]$ and $S = [\s_1,..., \s_K]\in \mathbb{R}^{K\times K}$. The pretraining prior $\bm{\beta} = [\beta_1,..., \beta_K]^\top \in \mathbb{R}^K$ must satisfy the  linear system:
\begin{equation}\label{eq:againq=Pq}
    (S-I)\bm{\beta} = \mathbf{0}.
\end{equation} 
\end{proposition}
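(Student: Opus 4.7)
The plan is to establish $S\bm{\beta} = \bm{\beta}$ by writing out $\beta_i = \mathbb{P}(Y=i)$ via the law of total probability, then substituting the class-mixture expression for the marginal $\mathbb{P}(\x)$ under the pre-training distribution. Since the posterior $s(\x)_i = \mathbb{P}(Y=i\mid\x)$ is exactly the quantity being integrated against $\mathbb{P}(\x)$, swapping the order of summation and integration should recover the definition of $\s_j$ and hence the matrix entries of $S$.

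Concretely, I would first write
\begin{equation*}
\beta_i = \mathbb{P}(Y=i) = \int \mathbb{P}(Y=i\mid \x)\,\mathbb{P}(\x)\,d\x.
\end{equation*}
Next I would substitute the finite mixture identity $\mathbb{P}(\x) = \sum_{j=1}^{K} \mathbb{P}(\x\mid Y=j)\,\beta_j$, exchange the (finite) sum with the integral, and recognize the inner integral as a conditional expectation:
\begin{equation*}
\int \mathbb{P}(Y=i\mid \x)\,\mathbb{P}(\x\mid Y=j)\,d\x \;=\; \mathbb{E}_{\x\mid Y=j}\bigl[s(\x)_i\bigr] \;=\; (\s_j)_i \;=\; S_{ij}.
\end{equation*}
This yields $\beta_i = \sum_{j=1}^{K} S_{ij}\beta_j = (S\bm{\beta})_i$ for every $i \in [K]$, so $S\bm{\beta} = \bm{\beta}$, which rearranges into the claimed $(S-I)\bm{\beta} = \mathbf{0}$. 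The key step is the line above where the pre-training class-conditional density $\mathbb{P}(\x\mid Y=j)$ appears, letting us replace sampling from $\mathbb{P}(\x)$ (pre-training marginal, which we cannot access) by sampling from $\mathbb{P}(\x\mid Y=j)$ (downstream conditional, which we can approximate with pseudo-labels).

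The main obstacle is conceptual bookkeeping rather than analytic difficulty. First, I must be precise that $s(\x)$ denotes the \emph{pre-training} posterior (this is what $f_\mathsf{f}$ is estimating up to the modelling gap), otherwise the identity would not pin down the pre-training prior $\bm{\beta}$. Second, I must respect the column convention: since $\s_j$ is the $j$-th column of $S$, its $i$-th entry is $\mathbb{E}_{\x\mid Y=j}[s(\x)_i]$, so summing over $j$ gives a matrix-vector product $S\bm{\beta}$ and not $S^\top \bm{\beta}$; a transpose slip would produce the vacuous statement that row-stochastic rows sum to $1$. Third, I should note that the crucial modelling assumption is $\mathbb{P}(\x\mid Y=j)$ being the same under pre-training and downstream (the covariate-shift assumption already used in Section~\ref{sec:debias}), so that the conditional expectations estimated on downstream data validly recover the pre-training prior.
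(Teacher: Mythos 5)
Your proposal is correct and follows essentially the same route as the paper's proof: the law of total probability for $\beta_y$, substitution of the mixture decomposition $\mathbb{P}(\x)=\sum_{y'}\mathbb{P}(\x|y')\beta_{y'}$, exchange of the finite sum with the integral, and identification of the inner integral as $\mathbb{E}_{\x|Y=j}[s(\x)_i]=S_{ij}$, giving $\bm{\beta}=S\bm{\beta}$. Your indexing is consistent with the paper's column convention, and your side remarks on the covariate-shift assumption match the assumption already invoked in Section~\ref{sec:debias}.
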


\begin{proof}
    \begin{align}
        \beta_y&=\int_\x \mathbb{P}_\mathsf{pt}(\x)\mathbb{P}_\mathsf{pt}(y|\x)\text{d}\x=\int_\x \sum_{y'\in[K]}\mathbb{P}(\x|y')\beta_{y'}\mathbb{P}_\mathsf{pt}(y|\x)\text{d}\x \\
        &=\sum_{y'\in[K]} \beta_{y'} \int_\x \mathbb{P}(\x|y')\mathbb{P}_\mathsf{pt}(y|\x)\text{d}\x \\
        &=\sum_{y'\in[K]} \beta_{y'} \mathbb{E}_{\x|Y=y'}[\mathbb{P}_\mathsf{pt}(y|\x)]\\
        &=\sum_{y'\in[K]} \beta_{y'} \mathbb{E}_{\x|Y=y'}[s(\x)]_y, \\
        &=\sum_{y'\in[K]} S_{yy'} \beta_{y'} \label{eq:row-version}
    \end{align}
Note that Equation \eqref{eq:row-version} precisely represents the matrix multiplication given by:
\begin{equation}\label{eq:eigen}
\bm{\beta} = S\bm{\beta}
\end{equation}
By moving the RHS term to the LHS, Eq.~\eqref{eq:againq=Pq} is obtained.\end{proof}

\subsection{Power Method to Estimate Pretraining Priors}\label{sec:power}
The solution to Equation \eqref{eq:eigen} involves finding the eigenvector corresponding to the eigenvalue of $1$ for the matrix $S$. We can apply SVD decomposition to find the solution; however, we find that the results might be numerically unstable. Instead, we adopt power iteration from~\cite{mises1929praktische}. Like the Jacobi and Gauss-Seidel methods, the power method for approximating eigenvalues is iterative. We first initialize $\bm{\beta}_0=[\frac{1}{K},...,\frac{1}{K}]$ of a uniform distribution. Then, we perform the sequence:
\begin{align}
    \bar{\bm{\beta}}_t &= S \bm{\beta}_{t-1} \\
    \bm{\beta}_t &= \frac{\bar{\bm{\beta}_t}}{\|\bar{\bm{\beta}_t}\|_1}
\end{align}
We repeat the sequence until the relative change is small: $\|\bm{\beta}_t-\bm{\beta}_{t-1}\| < \epsilon$.



 \section{Details of ImageNet Variant Datasets}
\noindent{\textbf{ImageNet-V2~\cite{ImageNetV2}:}} sampling from the original ImageNet and including 10,000 images of 1,000 ImageNet categories.

\noindent{\textbf{ImageNet Sketch~\cite{ImageNetSketch}:}} including 138 50,000 images and covering 1,000 ImageNet categories.

\noindent{\textbf{ImageNet-R~\cite{ImageNetR}:}} containing renditions (\emph{e.g.}, art, cartoons, graffiti) for ImageNet classes, comprising 30,000 images from 200 ImageNet categories.

\noindent{\textbf{ImageNet-A~\cite{ImageNetA}:}} collecting real-world images that are misclassified by ResNet-50, totaling 7,500 images from 200 of ImageNet categories.

\noindent{\textbf{ObjectNet:~\cite{BarbuMALWGTK19}} including 50,000 test images with rotation, background, and viewpoint, and overlapping 113 classes with ImageNet.


\end{document}